\newcommand{\citet}[1]{\citeauthor{#1} \shortcite{#1}} \newcommand{\citep}{\cite}  
\DeclareRobustCommand\widecheck[1]{{\mathpalette\@widecheck{#1}}}
\def\@widecheck#1#2{%
    \setbox\z@\hbox{\m@th$#1#2$}%
    \setbox\tw@\hbox{\m@th$#1%
       \widehat{%
          \vrule\@width\z@\@height\ht\z@
          \vrule\@height\z@\@width\wd\z@}$}%
    \dp\tw@-\ht\z@
    \@tempdima\ht\z@ \advance\@tempdima2\ht\tw@ \divide\@tempdima\thr@@
    \setbox\tw@\hbox{%
       \raise\@tempdima\hbox{\scalebox{1}[-1]{\lower\@tempdima\box
\tw@}}}%
    {\ooalign{\box\tw@ \cr \box\z@}}}
\newcommand{\yhat}{\widehat{y}}
\newcommand{\Ptil}{\widetilde{P}}
\newcommand{\Xbf}{\mathbf{X}}
\newcommand{\Phat}{\mathbb{P}}
\newcommand{\Yhat}{\widehat{Y}}
\newcommand{\Pchk}{\mathbb{Q}}
\newcommand{\Dcal}{\mathcal{D}}
\newcommand{\Lcal}{\mathcal{L}}
\newcommand{\Ebb}{\mathbb{E}}
\newcommand{\Ibb}{\mathbb{I}}
\newcommand{\Pbb}{\mathbb{P}}
\newcommand{\Qbb}{\mathbb{Q}}
\newcommand{\xvec}{{\bf x}}
\newcommand{\ebar}{\bar{e}}
\newcommand{\dbar}{\bar{d}}
\newcommand{\BlackBox}{\rule{1.5ex}{1.5ex}}  
\def\QED{~\rule[-1pt]{5pt}{5pt}\par\medskip}
\newenvironment{proof}{\par\noindent{\em Proof:\ }}{\hfill\BlackBox\\[.0mm]}
\newtheorem{theorem}{Theorem}
\newtheorem{definition}{Definition}
\newcommand*{\argmin}{\mathop{\mathrm{argmin}}}
\title{Fairness for Robust Log Loss Classification}
\author{Ashkan Rezaei\textsuperscript{\rm 1}\thanks{These two authors contributed equally.},
Rizal Fathony\textsuperscript{\rm 2}\footnotemark[1],
Omid Memarrast\textsuperscript{\rm 1},
Brian Ziebart\textsuperscript{\rm 1}\\ \textsuperscript{\rm 1} Department of Computer Science, University of Illinois at Chicago \\
\textsuperscript{\rm 2} School of Computer Science, Carnegie Mellon University
\\ arezae4@uic.edu, rfathony@cs.cmu.edu,
omemar2@uic.edu, bziebart@uic.edu }
\begin{document}

\maketitle

\begin{abstract}
Developing classification methods with high accuracy that also avoid unfair treatment of different groups has become increasingly important for data-driven decision making in social applications. 
Many existing methods enforce fairness constraints on a selected classifier (e.g., logistic regression) by directly forming constrained optimizations.
We instead re-derive a new classifier from the first principles of distributional robustness that incorporates fairness criteria into 
a worst-case logarithmic loss minimization.
This construction takes the form of a minimax game and produces a parametric exponential family conditional distribution that resembles truncated logistic regression.
We present the theoretical benefits of our approach in terms of its convexity and asymptotic convergence. 
We then demonstrate the practical advantages of our approach on three benchmark fairness datasets. 

\end{abstract} 

\section{Introduction}
Though maximizing accuracy has been the principal objective for classification tasks, competing priorities are also often of key concern in practice.  
Fairness properties that guarantee equivalent treatment to different groups 
in various ways are a prime example. 
These may be desirable---or even legally required---when making admissions decisions for universities 
\cite{chang2006applying,kabakchieva2013predicting},
employment and promotion decisions for organizations \cite{lohr2013big}, medical decisions for hospitals and insurers \cite{shipp2002diffuse,obermeyer2016predicting}, sentencing guidelines within the judicial system \cite{moses2014using,o2016weapons}, loan decisions for the financial industry 
\cite{shaw1988using,carter1987assessing}
and in many other applications.  
\emph{Group fairness} criteria generally partition the population based on a protected attribute into groups and mandate equal treatment of members across groups based on some defined statistical measures. 
We focus on 
three prevalent group fairness measures in this paper:   
\emph{demographic parity} \cite{calders2009building}, \emph{equalized odds}, 
and \emph{equalized opportunity} \cite{hardt2016equality}.

Techniques for constructing predictors with group fairness properties can be categorized into \emph{pre-}, \emph{post-}, and \emph{in-processing} methods.
\emph{Pre-processing methods} use reweighting and relabeling \cite{Kamiran2012,krasanakis2018adaptive} or other transformations of input data \cite{Calmon2017,Zemel13,feldman2015certifying,del2018obtaining,donini2018empirical,zhang2018achieving} 
to remove unfair dependencies 
with protected attributes. 
\emph{Post-processing methods} adjust the class labels (or label distributions) provided from black box classifiers to satisfy desired fairness criteria
\cite{hardt2016equality,Pleiss17,hacker2017continuous}. 
\emph{In-processing methods} integrate fairness criteria into the optimization procedure of the classifier with constraints/penalties \cite{donini2018empirical,zafar2017aistats,zafar2017fairness,zafar2017parity,cotter2018,goel2018non,woodworth2017learning,kamishima2011fairness,bechavod2017penalizing,quadrianto2017recycling}, 
meta-algorithms \cite{celis2019classification,menon2018cost}, reduction-based methods \cite{agarwal2018reductions}, or generative-adversarial training \cite{madras2018learning,zhang2018mitigating,celis2019improved,xu2018fairgan,adel2019one}.


Unlike many existing methods that directly form a constrained optimization from base classifiers, we take a step back and re-derive prediction from the underlying formulation of logistic regression.
Working from the first principles of distributionally robust estimation \cite{topsoe1979information,grunwald2004game,delage2010distributionally}, we 
incorporate fairness constraints into the formulation of the predictor.
We 
pose predictor selection as a minimax game between a predictor that is fair on a training sample and a worst-case approximator of the training data labels that maintains some statistical properties of the training sample.
Like \emph{post-processing methods}, our approach reshapes its predictions for each group to satisfy fairness requirements.
However, our approach is inherently an \emph{in-process method} that jointly optimizes this fairness transformation and linear feature-based parameters for an exponential family distribution that can be viewed as 
truncated logistic regression. 
Our method assumes group membership attributes are given at training and testing time, which matches many real-world applications. We leave the extension of our approach to settings with inferred group attributes as future work.

Our method reduces to a convex optimization problem
with a unique solution for resolving unfairness between groups that asymptotically minimizes the KL divergence from the true distribution.
In contrast,
many existing methods are susceptible to the local optima of non-convex optimization or to the approximation error from relaxations
\cite{zafar2017fairness,zafar2017aistats,cotter2018,woodworth2017learning,kamishima2011fairness,bechavod2017penalizing,quadrianto2017recycling}, 
do not have unique solutions for ensuring fairness \cite{hardt2016equality}, or produce mixtures of predictors  \cite{agarwal2018reductions} rather than a single coherent predictor.
For fairness criteria that include the true label (e.g., equalized opportunity, equalized odds),
we introduce a method for making predictions from label-conditioned distributions and establish desirable asymptotic properties. 
We demonstrate the practical advantages of our  approach compared to existing fair classification methods on benchmark data-driven decision tasks.

\section{Background}
\subsection{Measures of fairness for decision making}
Several useful measures have been proposed to quantitatively assess fairness in decision making.
Though our approach can be applied to a wider range of fairness constraints,
we focus on three prominent ones:
 \emph{Demographic Parity} \cite{calders2009building}, \emph{Equality of Opportunity} \cite{hardt2016equality} and \emph{Equality of Odds} \cite{hardt2016equality}. 
These are defined for binary decision settings with examples drawn from 
a population distribution, $({\bf X}, A, Y) \sim P$, with
$\widetilde{P}({\bf x}, a, y)$ denoting this empirical sample distribution, $\{{\bf x}_i, a_i, y_i\}_{i=1:n}$.
Here, $y=1$ is the ``advantaged'' class for positive decisions. 
Each example also possesses a protected attribute $a \in \{0,1\}$ that defines membership in one of two groups.
The general decision task is to construct a probabilistic prediction, $\Pbb(\yhat|\xvec,a)$ over the
decision variable $\yhat \in \{0,1\}$ given 
input ${\bf x} \in \boldsymbol{\mathcal{X}}$ and training data $\widetilde{P}({\bf x}, a, y)$.
We similarly notate an adversarial conditional distribution that approximates the labels as $\Qbb$.
$\Pbb$ and $\Qbb$ are the key objects being optimized in our formulation.


Fairness requires treating the different groups equivalently in various ways. 
Unfortunately, the na\"ive approach of excluding the protected attribute from the decision function, e.g., restricting to $\Pbb(\yhat|{\xvec})$, does not guarantee fairness because the protected attribute $a$ may still be inferred from ${\bf x}$ \cite{dwork2012fairness}.
Instead of imposing constraints on the predictor's inputs, definitions of fairness require statistical properties on its decisions to hold.

\begin{definition} \label{def:demographic}
A classifier satisfies \textsc{Demographic Parity (D.P.)} if the output variable $\widehat{Y}$ is statistically independent of the protected attribute $A$:
$P(\widehat{Y}\! =\! 1| A\! =\! a) = P(\widehat{Y}\! =\! 1),
\hspace{0.5em}
\forall 
a \in \{0,1\}$.
\end{definition}

\begin{definition} \label{def:odds}
A classifier satisfies \textsc{Equalized Odds (E.Odd.)} if the output variable $\widehat{Y}$ is conditionally independent of the protected attribute $A$ given the true label $Y$: 
$P(\widehat{Y}\! =\! 1 | A\! =\! a , Y\! =\! y) =  P(\widehat{Y}\! =\! 1 | Y\! =\! y), 
\hspace{0.5em} \forall 
y,a \in \{0,1\}.$ 
\end{definition}
\begin{definition} \label{def:opportunity}
A classifier satisfies \textsc{Equalized Opportunity (E.Opp.)} 
if the output variable $\widehat{Y}$ and protected attribute $A$ are conditionally independent given $Y = 1$: 
$P(\widehat{Y}\! =\! 1 | A\! =\! a , Y\! =\! 1) = \; 
P(\widehat{Y}\! =\! 1 | Y\! =\! 1), 
\hspace{0.5em} \forall 
a \in \{0,1\}$. 
\end{definition}

The sets of decision functions $\Pbb$ satisfying these fairness constraints are convex and can be defined using linear constraints \cite{agarwal2018reductions}. The general form for these constraints is:
\begin{align}
\Gamma: \Big\{ \Pbb \mid &
\tfrac{1}{p_{\gamma_1}} \Ebb_{ \substack{ \Ptil({\bf  x},a,y) \\
            \Pbb(\yhat|{\bf x},a,y)}
    }[\Ibb(\Yhat\!=\!1\wedge \gamma_1(A,Y))] \notag \\     
    &= \tfrac{1}{p_{\gamma_0}} \Ebb_{ \substack{ \Ptil({\bf x},a,y) \\
            \Pbb(\yhat|{\bf x},a,y)}
    }[\Ibb(\Yhat\!=\!1 \wedge \gamma_0(A,Y))] \label{eq:fairconstraints}
\Big\}, 
\end{align}
where $\gamma_1$ and $\gamma_0$ denote some combination of group membership and ground-truth class for each example, while $p_{\gamma_1}$ and $p_{\gamma_0}$ denote the empirical frequencies of $\gamma_1$ and $\gamma_0$: 
$p_{\gamma_i} = \Ebb_{\Ptil(a,y)}[\gamma_i(A,Y)]$. 
We specify $\gamma_1$ and $\gamma_0$ in \eqref{eq:fairconstraints} for fairness constraints (Definitions 1, 2, and 3) as:
\begin{align}
 &   \Gamma_{\text{dp}} \iff \gamma_j(A,Y)=\Ibb(A=j); \\  
 &   \Gamma_{\text{e.opp}} \iff \gamma_j(A,Y)=\Ibb(A=j \wedge Y=1); \\
 &   \Gamma_{\text{e.odd}} \iff \gamma_j(A,Y)=\left[\begin{array}{c}\Ibb(A=j \wedge Y=1)\\\Ibb(A=j \wedge Y=0)\end{array}\right].
\end{align}

\subsection{Robust log-loss minimization, maximum entropy, and logistic regression}
\label{sec:logloss}

The logarithmic loss, $-\sum_{{\bf x},y} P({\bf x},y) \log \Pbb(y|{\bf x})$, is an information-theoretic measure of the expected amount of ``surprise'' (in bits for $\log_2$) that the predictor, $\Pbb(y|{\bf x})$, experiences when encountering labels $y$ distributed according to $P({\bf x},y)$.
Robust minimization of the logarithmic loss serves a fundamental role in constructing exponential probability distributions (e.g., Gaussian, Laplacian, Beta, Gamma, Bernoulli \cite{lisman1972note}) and predictors \cite{manning2003optimization}.
For conditional probabilities, it is 
equivalent to maximizing the conditional entropy
\cite{jaynes1957information}:
\begin{align}
    & \min_{\Pbb(\yhat|{\bf x}) \in \Delta} \max_{\Qbb(\yhat|{\bf x}) \in \Delta \cap \Xi}
    \!\!-\sum_{{\bf x},\yhat} \Ptil({\bf x}) \Qbb(\yhat|{\bf x}) \log \Pbb(\yhat|{\bf x})\label{eq:maxent} \\ \notag
    & 
    = \!\!\! \max_{\Pbb(\yhat|{\bf x}) \in \Xi} 
    \!\!
    -\!\sum_{{\bf x},\yhat} \Ptil({\bf x}) \Pbb(\yhat|{\bf x}) \log \Pbb(\yhat|{\bf x})
    = \!\!
    \max_{\Phat(\yhat|{\bf x}) \in \Xi}
    H(\Yhat|{\bf X})
, 
\end{align}
after simplifications based on the fact that the saddle point solution is $\Pbb = \Qbb$.
When the loss maximizer $\Qbb$ is constrained to match the statistics of training data (specified using vector-valued feature function $\phi$), 
{\small
\begin{align}
&\Xi: \Big\{ \Qbb \mid 
 \Ebb_{
        \Ptil({\bf x}); 
            \Qbb(\yhat|{\bf x})
    }[\phi
    ({\bf X},\Yhat)] = 
     \Ebb_{\Ptil({\bf x},y)}\left[\phi({\bf X},Y) \right]
 \Big\}, \label{eq:featurematch}
\end{align}}%
the robust log loss minimizer/maximum entropy predictor (Eq. \eqref{eq:maxent}) is the logistic regression model,
$P(y|{\bf x}) \propto e^{\theta^\mathrm{T}\phi({\bf x},y)}$, with $\theta$ estimated by maximizing data likelihood \cite{manning2003optimization}.
While this distribution technically needs to only be defined at input values in which training data exists (i.e., $\Ptil({\bf x}) > 0$), we employ an inductive assumption that generalizes the form of the distribution to other inputs.

This formulation has been leveraged to provide robust predictions under covariate shift (i.e., difference in training and testing distributions) 
\cite{liu2014robust}
and for constructing consistent predictors for multiclass classifications \cite{fathony2018consistent} and graphical models \cite{fathony2018distributionally}.
Our approach similarly extends this fundamental formulation by imposing fairness constraints on $\Pbb$. 
However, 
since the fairness constraints and statistic-matching constraints are often not fully compatible (i.e., $\Gamma \not\subseteq \Xi$), the saddle point solution is no longer simple (i.e., $\Pbb\neq \Qbb$).

\section{Formulation and Algorithms}

Given fairness requirements for a predictor (Eq. \eqref{eq:fairconstraints}) and partial knowledge of the population distribution provided by a training sample (Eq. \eqref{eq:featurematch}), how should a fair predictor be constructed?
Like all inductive reasoning, good performance on a known training sample does not ensure good performance on the unknown population distribution.
We take a robust estimation perspective 
by seeking the best solution for the worst-case population distribution under these constraints.

\subsection{Robust and fair log loss minimization}

We 
formulate
the robust fair predictor's construction as a minimax game between the predictor and a worst-case approximator of the population distribution.
We assume the availability of a set of training samples,
$\{({\bf x}_i,a_i,y_i)\}_{i=1:n}$, which we equivalently denote by probability distribution $\Ptil({\bf x},a,y)$.
\begin{definition} \label{def:fairRobust}
The {\bf Fair Robust Log-Loss Predictor}, $\Pbb$, minimizes the worst-case log loss---as chosen by approximator $\Qbb$ constrained to reflect training statistics (denoted by set $\Xi$ of Eq. \eqref{eq:featurematch})---while providing empirical fairness guarantees\footnote{$\Delta$ is the set of conditional probability simplexes  
(i.e., $\Pbb(y|{\bf x},a) \geq 0, 
\sum_{y'} \Pbb(y'|{\bf x},a) = 1, \forall {\bf x}, y, a$).} 
(denoted by set $\Gamma$ of Eq. \eqref{eq:fairconstraints}):
\begin{align}
    \min_{\Pbb \in  \Delta \cap \Gamma} \max_{\Qbb \in \Delta \cap \Xi} & \Ebb_{\substack{
        \Ptil({\bf x},a,y)\\
            \Qbb(\yhat|{\bf x},a,y)}} 
            \left[-\log \Pbb(\Yhat|{\bf X},A,Y)\right].
    \label{eq:definition}
\end{align}
\end{definition}

Though conditioning the decision variable $\Yhat$ on the true label $Y$ would appear to introduce a trivial solution ($\Yhat=Y$), instead, $Y$  only influences $\Yhat$ 
based on fairness properties due to the robust predictor's construction. 
Note that if the fairness constraints do not relate $Y$ and $\Yhat$, the resulting distribution is
conditionally independent 
(i.e., $\Pbb(\Yhat|{\bf X},A,Y=0)=\Pbb(\Yhat|{\bf X},A,Y=1)$), and when all fairness constraints are removed, this formulation reduces to the familiar logistic regression model \cite{manning2003optimization}.
Conveniently,
this saddle point problem 
is convex-concave in $\Pbb$ and $\Qbb$ with additional convex constraints ($\Gamma$ and $\Xi$) on each distribution.

\subsection{Parametric Distribution Form}

By leveraging strong minimax duality in the ``log-loss game'' \cite{topsoe1979information,grunwald2004game} and strong Lagrangian duality \cite{boyd2004convex}, we derive the parametric form of our predictor.
\!\!\footnote{The proofs of Theorem \ref{thm:parametric} and other theorems in the paper are available in the supplementary material.} 
\begin{theorem}
\label{thm:parametric}
The {\bf Fair Robust Log-Loss Predictor} (Definition \ref{def:fairRobust}) has equivalent 
dual formulation:
\begin{align}
 \min_{\theta} & \max_{\lambda} \; \tfrac{1}{n}\!\!\!\!\!\!
    \sum_{({\bf x},a,y) \in \Dcal}\!\! \bigg\{
    \Ebb_{
        \Qbb_{\theta,\lambda} (\yhat|{\bf x},a,y) } 
            \!\left[-\log \Pbb_{\theta,\lambda}(\Yhat|{\bf x},a,y)\right] \notag \\ 
            &  
            + \theta^\top\!\! \left( \Ebb_{ \Qbb_{\theta,\lambda}(\yhat|{\bf x},a,y)}
    [\phi({\bf x},\Yhat)]
    - \phi({\bf x},y) \right) \notag
    \\
& + \lambda \big( \tfrac{1}{p_{\gamma_1}} \Ebb_{ \Pbb_{\theta,\lambda}(\yhat|{\bf x},a,y)}
    [\Ibb(\Yhat\!=\!1\wedge \gamma_1(A,Y))]  \notag \\ &  
    - \tfrac{1}{p_{\gamma_0}} \Ebb_{ \Pbb_{\theta,\lambda}(\yhat|{\bf x},a,y)}
    [\Ibb(\Yhat\!=\!1 \wedge \gamma_0(A,Y))]  \big) \bigg\}, \label{eq:dual} 
\end{align}%
with Lagrange multipliers $\theta$ and  $\lambda$ for moment matching and fairness constraints, respectively, and $n$ 
samples in the dataset. The parametric distribution of $\Pbb$ is: 
{\small
\begin{align}
    & \Pbb_{\theta,\lambda}(\yhat = 1|{\bf x},a,y) = \label{eq:truncate}
 \\ & 
    \begin{cases}
        \min\big\{
        e^{\theta^\top \phi({\bf x},1)}/ Z_{\theta}({\bf x}), \frac{p_{\gamma_1}}{\lambda} \big\} & \text{if } 
        \gamma_1(a,y) \land \lambda > 0\\
        \max \big\{
        e^{\theta^\top \phi({\bf x},1)}/ Z_{\theta}({\bf x}), 1\!-\!\frac{p_{\gamma_0}}{\lambda} \big\} & \text{if } 
        \gamma_0(a,y) \land \lambda > 0\\
        \max\big\{
        e^{\theta^\top \phi({\bf x},1)}/ Z_{\theta}({\bf x}), 1 \!+\! \frac{p_{\gamma_1}}{\lambda} \big\} & \text{if } 
        \gamma_1(a,y) \land \lambda <0\\
        \min \big\{
        e^{\theta^\top \phi({\bf x},1)}/ Z_{\theta}({\bf x}),
        -\frac{p_{\gamma_0}}{\lambda} \big\} & \text{if } 
        \gamma_0(a,y) \land \lambda <0\\
        e^{\theta^\top \phi({\bf x},1)}/ Z_{\theta}({\bf x})& \text{otherwise}, 
    \end{cases} \notag
\end{align}}%
where $Z_{\theta}({\bf x})= e^{\theta^\top \phi({\bf x},1)} + e^{\theta^\top \phi({\bf x},0)}$ is the normalization constant. 
The parametric distribution of $\Qbb$ is defined using the following relationship with $\Pbb$:
{
\begin{align}
    & \Qbb_{\theta,\lambda}(\yhat=1|{\bf x},a,y) =  \Pbb_{\theta,\lambda}(\yhat=1|{\bf x},a,y)
    \times
    \label{eq:pcheck} \\   & \qquad 
    \begin{cases}
         \big( 1+\frac{\lambda}{p_{\gamma_1}} \Pbb_{\theta,\lambda}(\yhat=0|{\bf x},a,y) \big) &  \text{if } \gamma_1(a,y)\\
        \big( 1-\frac{\lambda}{p_{\gamma_0}} \Pbb_{\theta,\lambda}(\yhat=0|{\bf x},a,y) \big) & \text{if } \gamma_0(a,y) \\
         1 & \text{otherwise}. \notag  
    \end{cases}
\end{align}
}
\end{theorem}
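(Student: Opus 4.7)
The plan is to apply strong minimax duality to swap the outer $\min_{\Pbb}$ and $\max_{\Qbb}$, then absorb the feature-matching set $\Xi$ (via multiplier $\theta$) and the fairness set $\Gamma$ (via multiplier $\lambda$) through Lagrangian duality. The objective $\Ebb_{\Ptil\,\Qbb}[-\log \Pbb]$ is convex in $\Pbb$, linear in $\Qbb$, and both $\Delta \cap \Xi$ and $\Delta \cap \Gamma$ are convex and compact, so Sion's minimax theorem justifies $\min_{\Pbb} \max_{\Qbb} = \max_{\Qbb} \min_{\Pbb}$. Because $\Xi$ and $\Gamma$ are defined by affine equality constraints on the simplex, strong Lagrangian duality applies (Slater is trivial for linear equalities), and I can pull the multipliers outside as $\min_{\theta} \max_{\lambda}$, leaving an unconstrained inner saddle over $(\Pbb,\Qbb) \in \Delta \times \Delta$.

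\textbf{Pointwise decoupling and interior KKT.} Since the log-loss, the features $\phi(\xb,\yhat)$, and the fairness indicators all act per-example, the inner saddle decouples into independent two-outcome games at each $(\xb,a,y) \in \Dcal$. Writing $p = \Pbb(\yhat=1|\xb,a,y)$, $q = \Qbb(\yhat=1|\xb,a,y)$, $\alpha_{\yhat} = \theta^\top \phi(\xb,\yhat)$, and $c = \lambda\bigl[\gamma_1(a,y)/p_{\gamma_1} - \gamma_0(a,y)/p_{\gamma_0}\bigr]$, the local game is $\min_{p \in [0,1]} \max_{q \in [0,1]} \bigl\{ q(\alpha_1 - \log p) + (1-q)(\alpha_0 - \log(1-p)) + c\,p \bigr\}$. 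When both variables are interior, stationarity in $q$ forces $\log\bigl(p/(1-p)\bigr) = \alpha_1 - \alpha_0$, i.e., $p = e^{\theta^\top \phi(\xb,1)}/Z_{\theta}(\xb)$, while stationarity in $p$ forces $q = p + c\,p(1-p)$. These are exactly the ``otherwise'' branches of \eqref{eq:truncate} and \eqref{eq:pcheck}.

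\textbf{Truncation cases --- the main obstacle.} The truncations in \eqref{eq:truncate} arise precisely when the interior relation $q = p + c\,p(1-p)$ would drive $q$ outside $[0,1]$; then the max pins $q^* \in \{0,1\}$ and re-minimizing the local objective in $p$ with $q$ fixed yields $p^* = 1/c$ (when $q^* = 1$, requiring $c > 0$) or $p^* = 1 + 1/c$ (when $q^* = 0$, requiring $c < 0$). Substituting $c = \pm \lambda/p_{\gamma_j}$ recovers the four thresholds $p_{\gamma_1}/\lambda$, $-p_{\gamma_0}/\lambda$, $1 - p_{\gamma_0}/\lambda$, and $1 + p_{\gamma_1}/\lambda$ in \eqref{eq:truncate}. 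The main difficulty is the sign bookkeeping across the four combinations of (active $\gamma_j$) $\times$ (sign of $\lambda$): each combination fixes whether the active boundary is $q^* = 0$ or $q^* = 1$ and whether the logistic value gets clipped from above (a $\min$) or from below (a $\max$). One also needs to verify that the saturated $(p^*, q^*)$ remains optimal for the idle player---i.e., the unused stationarity equation is replaced by the correct one-sided KKT inequality---which is routine once the case is fixed.

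\textbf{Assembly of the dual.} Substituting $\Pbb_{\theta,\lambda}$ and $\Qbb_{\theta,\lambda}$ back into the Lagrangian reproduces the three summands in \eqref{eq:dual}: the cross-entropy $\Ebb_{\Qbb_{\theta,\lambda}}[-\log \Pbb_{\theta,\lambda}]$, the feature-matching bilinear term $\theta^\top\bigl(\Ebb_{\Qbb_{\theta,\lambda}}[\phi(\xb,\Yhat)] - \phi(\xb,y)\bigr)$, and the fairness linear term weighted by $\lambda$. Averaging over the $n$ training samples and keeping $\min_{\theta} \max_{\lambda}$ on the outside yields \eqref{eq:dual} exactly, and strong duality then identifies its value with the primal \eqref{eq:definition}, completing the derivation.
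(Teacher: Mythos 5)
Your proposal is correct and follows essentially the same route as the paper's proof: minimax swap via the log-loss game, Lagrangian absorption of $\Xi$ and $\Gamma$ into multipliers $\theta$ and $\lambda$, interior stationarity yielding the logistic form of $\Pbb$ and the quadratic relation $q = p + c\,p(1-p)$ for $\Qbb$, and boundary analysis producing the four truncation thresholds. The only difference is presentational---you pin $q^*\in\{0,1\}$ per example and re-minimize in $p$ with a one-sided KKT check, whereas the paper reaches the identical conclusion by introducing explicit non-negativity multipliers $\beta$ on $\Qbb$ and invoking complementary slackness.
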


Note that the predictor's 
distribution is a member of the exponential family that is similar to 
standard binary logistic regression, but with the option to \emph{truncate} the probability based on the value of $\lambda$. The truncation of $\Pbb_{\theta,\lambda}(\yhat = 1|{\bf x},a,y)$ 
is from above 
when $0 \!<\! {p_{\gamma_1}}/{\lambda} \!<\! 1$ and $\gamma_1(a,y)\!=\!1$, 
and from below 
when $-1 \!<\! {p_{\gamma_1}}/{\lambda} \!<\! 0$ and $\gamma_1(a,y)\!=\!1$.
The approximator's 
distribution is computed from the predictor's distribution using the quadratic function in Eq. \eqref{eq:pcheck}, e.g., in the case where $\gamma_1(a,y)\!=\!1$: 
{\small
\[
    \Qbb_{\theta,\lambda}(\yhat=1|{\bf x},a,y) = 
    \rho (1 + \tfrac{\lambda}{p_{\gamma_1}}(1-\rho) )
        = (1 +\tfrac{\lambda}{p_{\gamma_1}}) \rho - \tfrac{\lambda}{p_{\gamma_1}} \rho^2, 
        \notag
\]}%
where $\rho \triangleq \Pbb_{\theta,\lambda}(\yhat\!=\!1|{\bf x},a,y)$.
\begin{figure}[tb]
\rotatebox{90}{\small $\qquad\qquad\qquad\quad \Qbb_\theta(\Yhat=1|{\bf x},a,y)$}
\centering
\includegraphics[width=0.93\columnwidth]{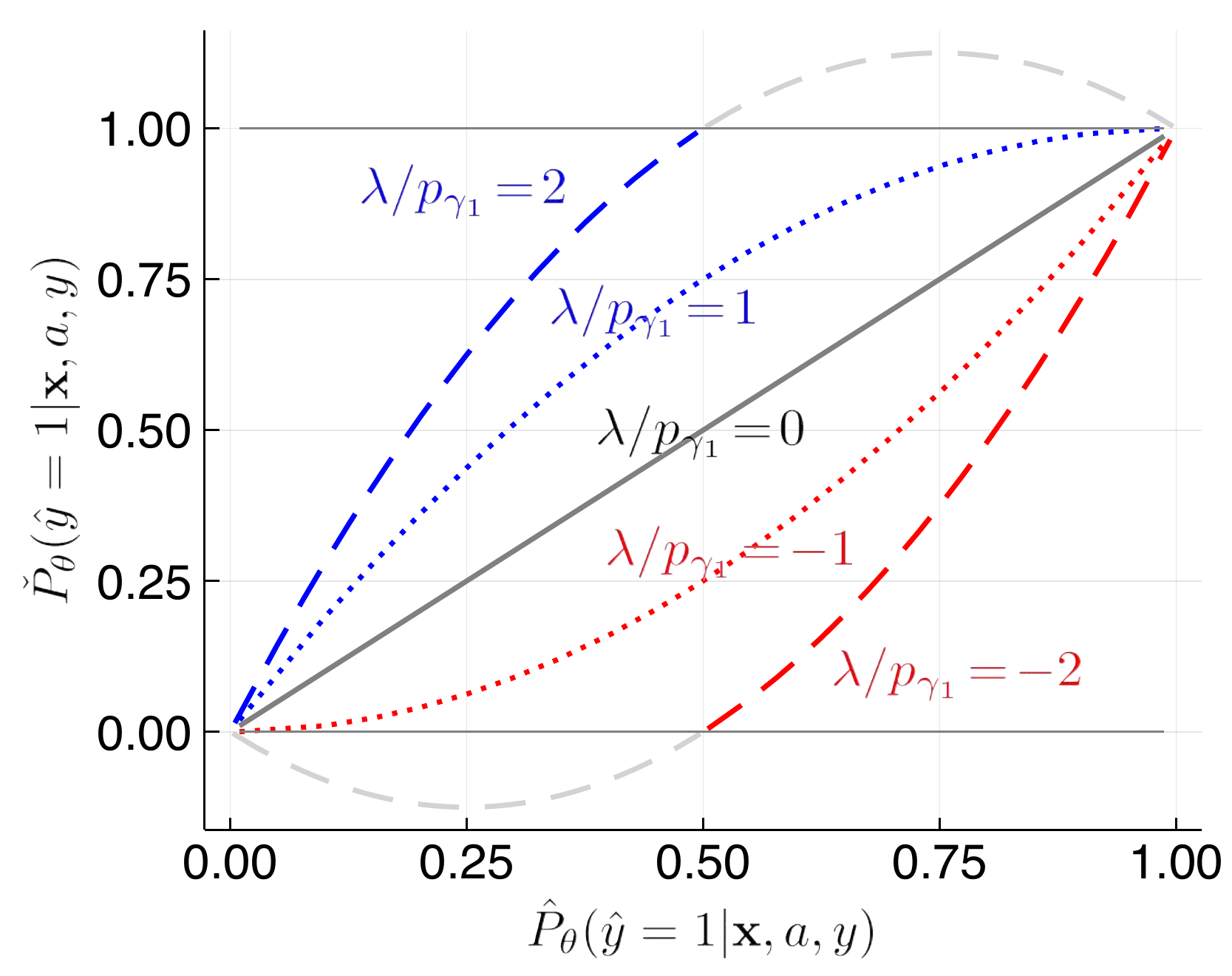}
\small \hspace*{8mm}
$\Pbb_\theta(\Yhat=1|{\bf x},a,y)$
\caption{The relationship between predictor and approximator's 
distributions, $\Pbb$ and $\Qbb$.}
\label{fig:pcheckhat}
\end{figure}
Figure \ref{fig:pcheckhat} illustrates the relationship between $\Pbb_{\theta,\lambda}(\yhat\!=\!1|{\bf x},a,y)$ and $\Qbb_{\theta,\lambda}(\yhat\!=\!1|{\bf x},a,y)$ for decisions influencing the fairness of group one (i.e., $\gamma_1(a,y)=1$). When ${\lambda}/{p_{\gamma_1}}\!=\!0$, the approximator's probability is equal to the predictor's probability as shown in the plot as a straight line.
Positive values of $\lambda$ curve the function upward 
(
e.g., ${\lambda}/{p_{\gamma_1}}\!=\!1$) as shown in the plot. 
For larger $\lambda$ (e.g., ${\lambda}/{p_{\gamma_1}}\!=\!2$), some of the valid predictor probabilities ($0 < \Pbb < 1$) map to invalid approximator probabilities (i.e., $\Qbb \geq 1$) according to the quadratic function. 
In this case (e.g., ${\lambda}/{p_{\gamma_1}}\!=\!2$ and $\Pbb_{\theta,\lambda}(\yhat\!=\!1|{\bf x},a,y) > 0.5$), the predictor's probability is truncated to ${p_{\gamma_1}}/{\lambda}\!=\!0.5$
according to Eq. \eqref{eq:truncate}.
Similarly, for negative $\lambda$, the curve is shifted downward and the predictor's probability is truncated when the quadratic function mapping results in a negative value of $\Qbb$. 
When $\gamma_0(a,y)=1$, the reverse shifting is observed, i.e., shifting downward when $\lambda > 0$ and shifting upward when $\lambda < 0$. 

We contrast our reshaping function of the decision distribution (Figure \ref{fig:pcheckhat}) with the \emph{post-processing method} of \citet{hardt2016equality} shown in Figure \ref{fig:postproc}.
Here, we use $\Qbb(\Yhat=1|{\bf x},a)$ to represent the estimating distributions
(the approximator's distribution in our method, and the standard logistic regression in \citet{hardt2016equality})
and the post-processed predictions as $\Pbb(\Yhat=1|{\bf x},a)$.
Both 
shift the positive prediction rates of each group to provide fairness.
However, our approach provides a monotonic and parametric transformation, avoiding the criticisms that \citet{hardt2016equality}'s modification (flipping some decisions) is partially random, creating an unrestricted hypothesis class \cite{bechavod2017penalizing}. 
Additionally, since our parametric reshaping function is learned within an \emph{in-processing} method, it avoids the noted suboptimalities that have been established for certain population distributions when employing post-processing alone \cite{woodworth2017learning}. 

\begin{figure}[tb]
\centering
\rotatebox{90}{\small $\qquad\qquad\qquad\quad \Qbb(\Yhat=1|{\bf x},a)$}
\includegraphics[width=0.9\columnwidth]{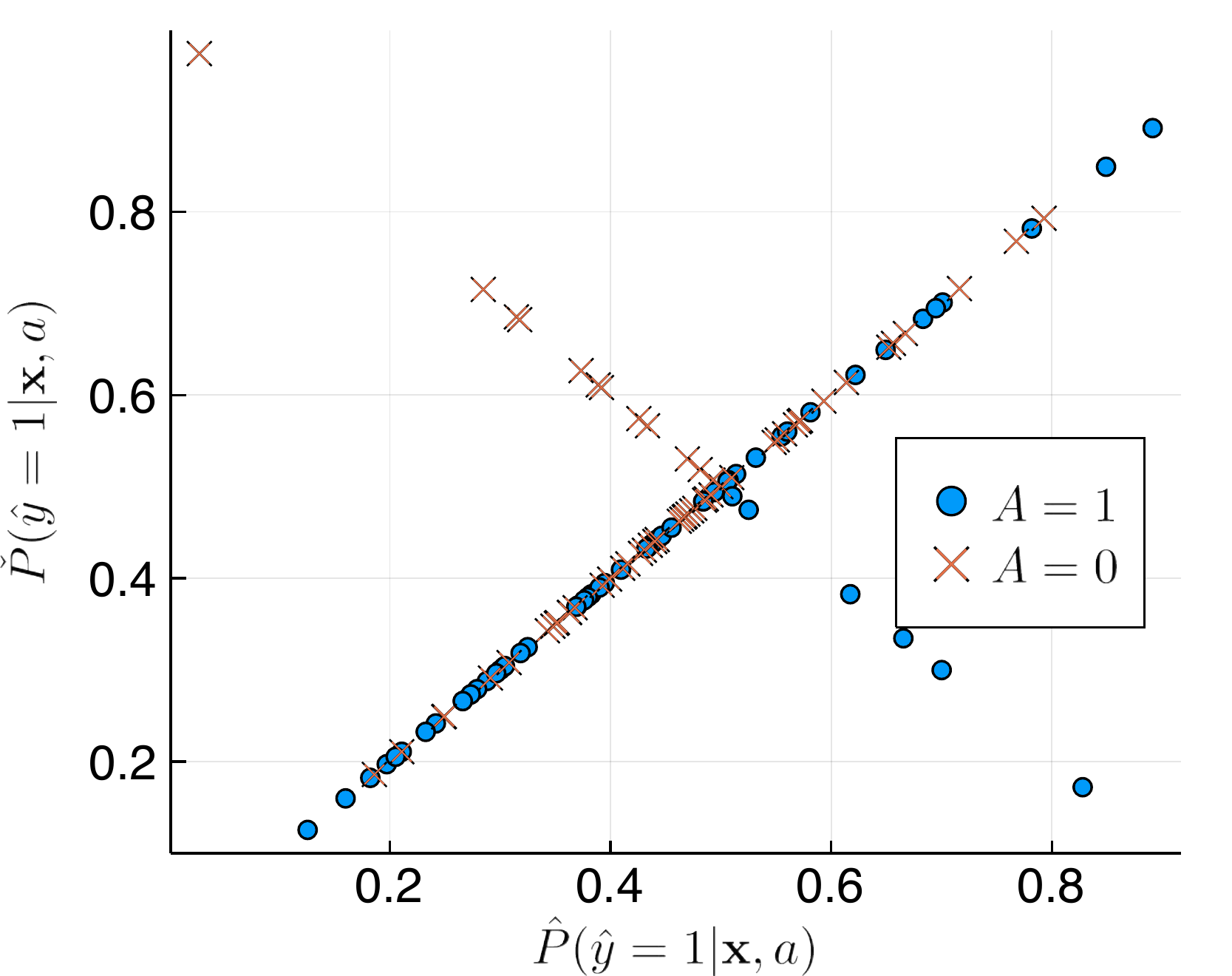}
\small \hspace*{8mm}
$\Pbb(\Yhat=1|{\bf x},a)$
\caption[
Post-processing correction of logistic regression 
\cite{Pleiss17,hardt2016equality} on the COMPAS dataset.]{Post-processing correction\footnotemark{ } 
of logistic regression 
\cite{Pleiss17,hardt2016equality} on the COMPAS dataset.
}
\label{fig:postproc}
\end{figure}
\footnotetext{{\url{https://github.com/gpleiss/equalized_odds_and_calibration}}}

\subsection{Enforcing fairness constraints}

The inner maximization in Eq. \eqref{eq:dual} 
finds the optimal $\lambda$ that enforces the fairness constraint. From the perspective of the parametric distribution of $\Pbb$, this is equivalent to finding threshold points (e.g., ${p_{\gamma_1}}/{\lambda}$ and $1 - {p_{\gamma_0}}/{\lambda}$) in the $\min$ and $\max$ function of Eq. \eqref{eq:truncate} such that the expectation of the truncated exponential probabilities of $\Pbb$ in group $\gamma_1$ match the one in group $\gamma_0$. Given the value of $\theta$, we find the optimum $\lambda^*$ directly by finding the threshold points. 
We first compute the exponential probabilities $P_{e}(\yhat=1|{\bf x},a,y) = \exp(\theta^\top \phi({\bf x},1)) / {Z_{\theta}({\bf x})}$ for each examples in $\gamma_1$ and $\gamma_0$. Let $E_1$ and $E_0$ be the sets that contain $P_e$ for group $\gamma_1$ and $\gamma_0$ respectively.
Finding $\lambda^*$ given the sets $E_1$ and $E_0$ requires sorting the probabilities for each set, and then iteratively finding the threshold points for both sets 
simultaneously. We refer 
to the supplementary material 
for the detailed algorithm.

\subsection{Learning}

Our learning process seeks 
parameters $\theta,\lambda$ for our distributions ($\Pbb_{\theta,\lambda}$ and $\Qbb_{\theta,\lambda}$) that 
match the statistics of the approximator's distribution with training data ($\theta$) and provide fairness ($\lambda$), as illustrated in Eq. \eqref{eq:dual}. 
Using our algorithm from 
the previous subsection 
to directly compute the best $\lambda$ given arbitrary values of $\theta$, denoted $\lambda^*_\theta$, 
the optimization of Eq. \eqref{eq:dual} reduces to a simpler optimization solely over $\theta$, as described in Theorem \ref{thm:objective}.

\begin{theorem}
\label{thm:objective}
Given the optimum value of $\lambda^*_\theta$ for $\theta$, the dual formulation in Eq. \eqref{eq:dual} reduces to: 
{\small
\begin{align}
 &\min_{\theta}  \tfrac{1}{n}
    \textstyle\sum_{({\bf x},a,y) \in \Dcal} \ell_{\theta,\lambda^*_\theta}({\bf x},a,y), \quad \text{ where: }
    \label{eq:objective} \\
    &\ell_{\theta,\lambda^*}({\bf x},a,y) = 
    - \theta^\top \phi({\bf x},y) + \notag \\ &
    \begin{cases}
        -\log(\frac{p_{\gamma_1}}{\lambda^*_\theta}) + \theta^\top ( \phi({\bf x},1) 
        ) & \text{if } \gamma_1(a,y) \wedge T({\bf x},\theta) \wedge \lambda^*_\theta > 0 \\
        -\log(\frac{p_{\gamma_0}}{\lambda^*_\theta}) + \theta^\top ( \phi({\bf x},0) 
        ) & \text{if } \gamma_0(a,y) \wedge T({\bf x},\theta) \wedge \lambda^*_\theta > 0 \\
        -\log(-\frac{p_{\gamma_1}}{\lambda^*_\theta}) + \theta^\top ( \phi({\bf x},0) 
        ) & \text{if } \gamma_1(a,y) \wedge T({\bf x},\theta) \wedge \lambda^*_\theta < 0\\
        -\log(-\frac{p_{\gamma_0}}{\lambda^*_\theta}) + \theta^\top ( \phi({\bf x},1) 
        ) & \text{if } \gamma_0(a,y) \wedge T({\bf x},\theta) \wedge \lambda^*_\theta < 0 \\
        \log Z_\theta({\bf x})  & \text{otherwise}. \notag
    \end{cases}
\end{align}}%
Here, 
$T({\bf x},\theta)\triangleq 1$ if the exponential probability is truncated (for example when $e^{\theta^\top \phi({\bf x},1)}/{Z_{\theta}({\bf x})} > {p_{\gamma_1}}/{\lambda^*_\theta}$, $\gamma_1(a,y)=1$, and $\lambda^*_\theta > 0$), and is $0$ otherwise.
\end{theorem}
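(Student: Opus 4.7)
The plan is to substitute the closed-form expressions for $\Pbb_{\theta,\lambda}$ from Eq.~\eqref{eq:truncate} and $\Qbb_{\theta,\lambda}$ from Eq.~\eqref{eq:pcheck} directly into the dual objective Eq.~\eqref{eq:dual}, then perform a case analysis on each per-example summand according to which of the five branches of Eq.~\eqref{eq:truncate} applies to $({\bf x},a,y)$. Since $\max_\lambda$ is replaced by evaluation at $\lambda^*_\theta$, the only remaining work is to show that each per-example contribution equals $\ell_{\theta,\lambda^*_\theta}({\bf x},a,y)$ up to a fairness-slack residual that vanishes when aggregated.

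The key observation driving the simplification is that the quadratic map in Eq.~\eqref{eq:pcheck} sends each truncation threshold exactly to a vertex of the probability simplex. For instance, substituting $\rho = \Pbb_{\theta,\lambda}(\Yhat=1|{\bf x},a,y) = p_{\gamma_1}/\lambda$ into $\rho\bigl(1+\tfrac{\lambda}{p_{\gamma_1}}(1-\rho)\bigr)$ yields $\Qbb_{\theta,\lambda}(\Yhat=1|{\bf x},a,y)=1$, and analogous algebra in the other three truncated branches produces $\Qbb(\Yhat=1)\in\{0,1\}$. Thus $\Qbb$ becomes a Dirac on these examples, collapsing the expectation $\Ebb_{\Qbb}[-\log \Pbb(\Yhat|{\bf x},a,y)] + \theta^\top \Ebb_{\Qbb}[\phi({\bf x},\Yhat)]$ to $-\log \Pbb(\yhat^\star|{\bf x},a,y) + \theta^\top \phi({\bf x},\yhat^\star)$ at the support point $\yhat^\star$, which after substituting the truncation value reproduces the corresponding non-otherwise branch of $\ell_{\theta,\lambda^*_\theta}$ once $\theta^\top \phi({\bf x},y)$ is subtracted. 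In the untruncated ``otherwise'' branch, the identity $-\log \Pbb_\theta(\yhat|{\bf x}) = -\theta^\top \phi({\bf x},\yhat) + \log Z_\theta({\bf x})$ makes $\theta^\top \Ebb_{\Qbb}[\phi({\bf x},\Yhat)]$ cancel the linear-in-$\phi$ part of the expected log-loss, leaving $\log Z_\theta({\bf x}) - \theta^\top \phi({\bf x},y)$.

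What remains is the explicit $\lambda$-term in Eq.~\eqref{eq:dual}, which does \emph{not} appear in $\ell_{\theta,\lambda^*_\theta}$. Aggregated over the dataset, this term equals $\lambda^*_\theta$ multiplied by the fairness-constraint slack of $\Pbb_{\theta,\lambda^*_\theta}$, i.e., $\lambda^*_\theta \bigl(\tfrac{1}{n p_{\gamma_1}} \sum_i \Pbb(\Yhat{=}1|{\bf x}_i,a_i,y_i)\gamma_1(a_i,y_i) - \tfrac{1}{n p_{\gamma_0}} \sum_i \Pbb(\Yhat{=}1|{\bf x}_i,a_i,y_i)\gamma_0(a_i,y_i)\bigr)$. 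By the envelope condition for the inner maximization over $\lambda$ in Eq.~\eqref{eq:dual}---equivalently, KKT complementary slackness between the fairness constraint set $\Gamma$ and its dual variable $\lambda$---at $\lambda^*_\theta$ either $\lambda^*_\theta=0$ or this slack vanishes, so the aggregate $\lambda$-term is zero and the objective reduces to $\tfrac{1}{n} \sum_i \ell_{\theta,\lambda^*_\theta}({\bf x}_i,a_i,y_i)$.

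The main obstacle is the truncated-case algebra: in each of the four subcases (sign of $\lambda^*_\theta$ crossed with whether $\gamma_1$ or $\gamma_0$ holds) one must verify that Eq.~\eqref{eq:pcheck} sends the truncation value exactly to a simplex vertex and that the matching branch of $\log$ in Eq.~\eqref{eq:objective}---for example $-\log(-p_{\gamma_1}/\lambda^*_\theta)$ when $\lambda^*_\theta<0$---has a positive argument. Because the signs of $1 \pm p_\gamma/\lambda$ flip across cases, aligning $\yhat^\star$ with the correct $\phi({\bf x},\cdot)$ term requires care. Once this case-by-case matching is carried out consistently, assembling the five branches into the piecewise $\ell_{\theta,\lambda^*_\theta}$ and applying complementary slackness to discard the aggregate $\lambda$-term completes the reduction.
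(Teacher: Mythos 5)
Your proposal is correct and follows essentially the same route as the paper's proof: substitute the parametric forms of $\Pbb$ and $\Qbb$ into Eq.~\eqref{eq:dual}, observe that $\Qbb$ collapses to a point mass ($\Qbb(\Yhat{=}1)\in\{0,1\}$) exactly on the truncated examples so the expectation evaluates at a single $\yhat^\star$, and match each sign/group case to the corresponding branch of $\ell_{\theta,\lambda^*_\theta}$. You are in fact slightly more explicit than the paper, which silently drops the $\lambda$-term of Eq.~\eqref{eq:dual} at $\lambda^*_\theta$, whereas you justify its vanishing via the first-order (complementary-slackness) condition of the inner maximization.
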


We present an important optimization property for our objective function in the following theorem.
\begin{theorem}
\label{thm:convex}
The objective function in Theorem \ref{thm:objective} (Eq. \eqref{eq:objective}) is convex with respect to $\theta$.
\end{theorem}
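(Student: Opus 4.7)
The plan is to exploit the minimax structure of the dual formulation. Writing $\mathrm{Obj}(\theta)$ for the objective in Eq.~\eqref{eq:objective}, Theorem~\ref{thm:objective} shows
\[
\mathrm{Obj}(\theta) = \max_{\lambda} F(\theta,\lambda),
\]
where $F(\theta,\lambda)$ denotes the value of the expression inside the $\min_\theta\max_\lambda$ of Eq.~\eqref{eq:dual}, evaluated at the closed-form distributions $(\Pbb_{\theta,\lambda},\Qbb_{\theta,\lambda})$ from Theorem~\ref{thm:parametric}. Since a pointwise supremum of convex functions is convex, it suffices to show that $F(\theta,\lambda)$ is convex in $\theta$ for every fixed $\lambda$.

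For fixed $\lambda$, I would re-interpret $F(\theta,\lambda)$ as the value of the inner unconstrained saddle-point problem that Theorem~\ref{thm:parametric} solves in closed form. Define the Lagrangian
\[
L(\Pbb,\Qbb,\theta,\lambda) = \Ebb_{\Ptil,\Qbb}\!\bigl[-\log \Pbb(\Yhat|\Xbf,A,Y)\bigr] + \theta^\top\!\bigl(\Ebb_{\Ptil,\Qbb}[\phi(\Xbf,\Yhat)] - \Ebb_{\Ptil}[\phi(\Xbf,Y)]\bigr) + \lambda\,\mathrm{Fair}(\Pbb),
\]
where $\mathrm{Fair}(\Pbb)$ is the linear functional of $\Pbb$ given by the difference of the two sides of Eq.~\eqref{eq:fairconstraints}. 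The combination of strong log-loss minimax duality and strong Lagrangian duality invoked in the proof of Theorem~\ref{thm:parametric} guarantees that $(\Pbb_{\theta,\lambda},\Qbb_{\theta,\lambda})$ is a saddle point of $L$ over the unconstrained conditional simplexes $\Delta \times \Delta$, so
\[
F(\theta,\lambda) = L(\Pbb_{\theta,\lambda},\Qbb_{\theta,\lambda},\theta,\lambda) = \max_{\Qbb \in \Delta}\,\min_{\Pbb \in \Delta}\, L(\Pbb,\Qbb,\theta,\lambda).
\]

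The key observation is then that, for fixed $\Qbb$ and $\lambda$, the $\theta$-dependence of $L$ enters only through the moment-matching term $\theta^\top\bigl(\Ebb_{\Ptil,\Qbb}[\phi(\Xbf,\Yhat)] - \Ebb_{\Ptil}[\phi(\Xbf,Y)]\bigr)$, which does not involve $\Pbb$. Pulling this term outside the inner minimization gives
\[
\min_{\Pbb \in \Delta} L(\Pbb,\Qbb,\theta,\lambda) = \theta^\top\bigl(\Ebb_{\Ptil,\Qbb}[\phi(\Xbf,\Yhat)] - \Ebb_{\Ptil}[\phi(\Xbf,Y)]\bigr) + C(\Qbb,\lambda),
\]
with $C(\Qbb,\lambda) = \min_{\Pbb \in \Delta}\bigl\{\Ebb_{\Ptil,\Qbb}[-\log\Pbb] + \lambda\,\mathrm{Fair}(\Pbb)\bigr\}$ independent of $\theta$. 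Hence for each $\Qbb$ the inner minimum is affine in $\theta$, and the outer $\max_{\Qbb}$ is a pointwise supremum of affine functions, which is convex in $\theta$. Therefore $F(\theta,\lambda)$ is convex in $\theta$ for each $\lambda$, and $\mathrm{Obj}(\theta) = \max_{\lambda} F(\theta,\lambda)$ is convex in $\theta$.

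The main obstacle is the identification $F(\theta,\lambda) = \max_{\Qbb}\min_{\Pbb} L$ over the unconstrained simplexes: securing it rests on the saddle-point existence and the minimax swap underlying Theorem~\ref{thm:parametric}, which in turn use Sion's theorem on the compact convex domain $\Delta \times \Delta$, exploiting convexity of $L$ in $\Pbb$ (since $-\log \Pbb$ is convex and $\mathrm{Fair}(\Pbb)$ is linear) and linearity of $L$ in $\Qbb$. Once this identification is in hand, the remainder of the argument is the one-line observation that the $\theta$-carrying part of $L$ is $\Pbb$-free, making $\min_{\Pbb} L$ affine in $\theta$.
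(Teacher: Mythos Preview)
Your argument is correct, and it shares with the paper the outer step ``$\mathrm{Obj}(\theta)=\max_\lambda F(\theta,\lambda)$, hence convex as a pointwise supremum.'' Where you diverge is in showing that $F(\theta,\lambda)$ is convex in $\theta$ for fixed $\lambda$. The paper argues this by direct inspection of the explicit per-sample loss $\ell_{\theta,\lambda}(\xvec,a,y)$ from Theorem~\ref{thm:objective}: in the truncated branches the expression is affine in $\theta$, and in the remaining branch it is $\log Z_\theta(\xvec)-\theta^\top\phi(\xvec,y)$, which is convex; summing and taking the supremum over $\lambda$ finishes. You instead go back to the saddle-point representation $F(\theta,\lambda)=\max_{\Qbb\in\Delta}\min_{\Pbb\in\Delta}L(\Pbb,\Qbb,\theta,\lambda)$ established in Theorem~\ref{thm:parametric}, observe that the only $\theta$-dependence in $L$ lies in the moment-matching term (which is $\Pbb$-free), so $\min_{\Pbb}L$ is affine in $\theta$ for each $\Qbb$, and conclude convexity from the outer supremum over $\Qbb$. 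Your route is more structural and avoids all case analysis; in particular it sidesteps the need to verify that the piecewise formula glues convexly across the $T(\xvec,\theta)$ boundaries, which the paper's proof leaves implicit. The paper's route is shorter once the explicit formula is in hand and needs no further appeal to minimax duality. Both are valid; yours trades a short Sion-type justification of the $\max_{\Qbb}\min_{\Pbb}$ swap for freedom from the piecewise bookkeeping.
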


To improve the generalizability of our parametric model, we employ a standard L2 regularization technique that is common for 
logistic regression models: 
$\theta^* = \textstyle\argmin_{\theta} 
    \textstyle\sum_{({\bf x},a,y) \in \Dcal} \ell_{\theta,\lambda^*_\theta}({\bf x},a,y) + \tfrac{C}{2} \| \theta \|_2^2,
$ 
where $C$ is the regularization constant. We employ a standard batch gradient descent optimization algorithm (e.g., L-BFGS) to obtain a solution for $\theta^*$.\!\footnote{We refer the reader to the supplementary material 
for 
details.} 
We also compute the corresponding solution for the inner optimization, $\lambda^*_{\theta^*}$, and then construct the optimal predictor and approximator's parametric distributions based on the values of $\theta^*$ and $\lambda^*_{\theta^*}$.

\subsection{Inference}
\label{sec:inference}

In the inference step, we apply the optimal parametric predictor distribution $\Pbb_{\theta^*,\lambda^*_{\theta^*}}$ 
to new example inputs $({\bf x},a)$ in the testing set. Given the value of $\theta^*$ and $\lambda^*_{\theta^*}$, we calculate the predictor's distribution for our new data point using Eq. \eqref{eq:truncate}. 
Note that the predictor's parametric distribution also depends on the group membership of the example. 
For fairness constraints 
not based on the actual label $Y$,
e.g., \textsc{D.P.}, this parametric distribution can be directly applied to make predictions.
However, for fairness constraints that depend on the true label, e.g., \textsc{E.Opp.} and \textsc{E.Odd.}, 
we introduce a prediction procedure that estimates the true label using the approximator's parametric distribution. 

For fairness constraints that depend on the true label, our algorithm outputs the predictor and approximator's parametric distributions conditioned on the value of true label, i.e., 
$\Pbb(\yhat|{\bf x},a,y)$ and $\Qbb(\yhat|{\bf x},a,y)$. 
Our goal is to produce the conditional probability of $\yhat$ that does not depend on the true label, i.e., $\Pbb(\yhat|{\bf x},a)$. We construct the following procedure to estimate this probability. 
Based on the marginal probability rule, $\Pbb(\yhat|\xvec,a)$ can be expressed as: 
\begin{align}
     \Pbb(\yhat|{\bf x},a) &= \Pbb(\yhat|{\bf x},a,y=1) P(y=1|{\bf x},a) \label{eq:pred-fp} \\ & \quad 
     + \Pbb(\yhat|\xvec,a,y=0) P(y=0|\xvec,a). \notag
\end{align}
However, since we do not have access to $P(y|{\bf x},a)$, we cannot directly apply this expression. Instead, we approximate $P(y|{\bf x},a)$ with the approximator's distribution $\Qbb(\yhat|{\bf x},a)$. Using the similar marginal probability rule, we express the estimate as:
\begin{align}
    \Qbb(\yhat|\xvec,a) &\approx \Qbb(\yhat|\xvec,a,y=1) \Qbb(\yhat=1|\xvec,a) \\ & \quad 
     + \Qbb(\yhat|\xvec,a,y=0) \Qbb(\yhat=0|\xvec,a). \notag
\end{align}
By rearranging the terms above, we calculate the estimate as:
\begin{align}
    \Qbb(\yhat\!=\!1|\xvec,a) &\!=\! \Qbb(\yhat\!=\!1|\xvec,a,y\!=\!0) / (\Qbb(\yhat\!=\!0|\xvec,a,y\!=\!1) \notag \\ & \quad\!+\! \Qbb(\yhat\!=\!1|\xvec,a,y\!=\!0)), \label{eq:pcheck-fp}
\end{align}
which is directly computed from the approximator's parametric distribution produced by our model using Eq. \eqref{eq:pcheck}.
Finally, to obtain the predictor's conditional probability estimate ($\Pbb(\yhat|\xvec,a)$), we replace $P(y|\xvec,a)$ in Eq. \eqref{eq:pred-fp} with $\Qbb(\yhat|\xvec,a)$ calculated from Eq. \eqref{eq:pcheck-fp}.

\subsection{Asymptotic convergence property}

The ideal behavior of an algorithm is an important consideration in its design.
Asymptotic convergence properties consider 
a learning algorithm when it is provided with access to the population distribution $P(\xvec,a,y)$ and a fully expressive feature representation. 
We show in Theorem \ref{thm:consistency} that in the limit, our method finds a predictor distribution that has a desirable characteristic in terms of the Kullback-Leibler (KL) divergence from the true distribution. 

\begin{theorem}
\label{thm:consistency}
Given 
the population distribution $P(\xvec,a,y)$ and a fully expressive feature representation, our formulation (Def.  \ref{def:fairRobust}) finds the {\bf fair} predictor with the minimal KL-divergence from $P(\xvec,a,y)$.
\end{theorem}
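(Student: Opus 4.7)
The plan is to reduce the minimax in Definition \ref{def:fairRobust} to a constrained KL-divergence minimization in the population, fully-expressive-feature limit, exploiting the classical identity that cross-entropy equals entropy plus KL.

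First I would replace the empirical $\Ptil$ with the population $P$ throughout Eq.\ \eqref{eq:definition}. Under a fully expressive $\phi$, the statistic-matching set $\Xi$ becomes restrictive enough that every $\Qbb \in \Xi$ must satisfy $\sum_{a,y}P(a,y|\xvec)\,\Qbb(\yhat|\xvec,a,y) = P(y=\yhat|\xvec)$ pointwise in $\xvec$, so in particular the true conditional $P(\cdot|\xvec,a)$ (taken independent of $y$) and the point mass $\Ibb(\yhat=y)$ both lie in $\Xi$.

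Next I would invoke the same strong minimax duality that underlies Theorem \ref{thm:parametric} to exchange the $\min$ and $\max$. For fixed $\Pbb \in \Gamma$, the inner max over $\Qbb \in \Xi$ is a linear program in $\Qbb$. Combining the marginalization of Section \ref{sec:inference}, which defines the inferred predictor $\Pbb(\yhat|\xvec,a) = \sum_y P(y|\xvec,a)\Pbb(\yhat|\xvec,a,y)$, with the convex-concave saddle-point structure noted after Definition \ref{def:fairRobust}, one shows that the inner maximum equals the cross-entropy $\Ebb_{P(\xvec,a,y)}[-\log \Pbb(Y|\Xbf,A)]$ of the inferred marginal $\Pbb(\cdot|\xvec,a)$ against $P$. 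I would then apply the decomposition $\Ebb_{P}[-\log \Pbb(Y|\Xbf,A)] = H(Y|\Xbf,A) + \Ebb_{P(\xvec,a)}[\mathrm{KL}(P(Y|\xvec,a)\,\|\,\Pbb(\yhat|\xvec,a))]$, and note that $H(Y|\Xbf,A)$ is independent of $\Pbb$; the outer minimization over $\Pbb \in \Gamma$ therefore becomes minimization of the expected conditional KL, which is exactly the joint KL $\mathrm{KL}(P(\xvec,a,y) \,\|\, P(\xvec,a)\Pbb(\yhat|\xvec,a))$ appearing in the statement.

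The hard part is establishing that the inner maximum collapses to a plain cross-entropy against $P$. Because $\Pbb$ conditions on the true label $y$ while $P(y|\xvec,a)$ does not, an adversarial $\Qbb \in \Xi$ could in principle exploit the extra $y$-conditioning of $\Pbb$ to inflate the log-loss beyond $\Ebb_P[-\log\Pbb(Y|\Xbf,A)]$. Ruling this out requires the Section \ref{sec:inference} marginalization together with the observation that the fairness set $\Gamma$ constrains $\Pbb$ only through its $(\xvec,a)$-marginal behaviour, so the adversary gains nothing by splitting mass across $y$-slices of $\Pbb$. Once this reduction is secured, the cross-entropy/KL identity and convexity of $\Gamma$ finish the argument.
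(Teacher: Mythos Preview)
Your route diverges from the paper's in a way that creates a real gap. The paper's argument is almost trivial once one reads ``fully expressive'' as intended: with features rich enough to index each $(\xvec,a,y)$, the constraint set $\Xi$ collapses to a \emph{single} approximator, effectively $\Qbb(\yhat\,|\,\xvec,a,y)=\Ibb(\yhat=y)$, so the inner $\max$ disappears entirely. The objective then becomes the plain population cross-entropy $\Ebb_{P(\xvec,a,y)}[-\log\Pbb(Y\,|\,\Xbf,A,Y)]$, and one line of the identity ``cross-entropy $=$ entropy $+$ KL'' finishes the proof: minimizing over $\Pbb\in\Gamma$ is exactly minimizing $D_{\mathrm{KL}}(P\,\|\,\Pbb)$. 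No minimax swap, no marginalization, no analysis of an adversary.

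By reading ``fully expressive'' more weakly (only $\phi(\xvec,\yhat)$), you leave $\Xi$ genuinely nondegenerate and then have to argue that the inner maximum nonetheless collapses to a cross-entropy of the \emph{marginalized} predictor $\Pbb(\yhat\,|\,\xvec,a)$. That claim is the step you flag as ``the hard part,'' and it does not hold: under your constraint $\sum_{a,y}P(a,y\,|\,\xvec)\Qbb(\yhat\,|\,\xvec,a,y)=P(y{=}\yhat\,|\,\xvec)$, the adversary retains freedom across $(a,y)$-slices and can shift mass toward $\yhat$ values where $\Pbb(\yhat\,|\,\xvec,a,y)$ is small, driving the loss strictly above the cross-entropy you want. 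The heuristic that $\Gamma$ ``constrains $\Pbb$ only through its $(\xvec,a)$-marginal'' does not neutralize this, because the log-loss still sees the full $y$-conditioned $\Pbb$. You are also conflating Theorem~\ref{thm:consistency} with Theorem~\ref{thm:consistency2}: the KL in Theorem~\ref{thm:consistency} is between $P$ and $\Pbb(\yhat\,|\,\xvec,a,y)$ directly, not the marginal $\Pbb(\yhat\,|\,\xvec,a)$ produced by the inference procedure of \S\ref{sec:inference}. Adopt the paper's reading of ``fully expressive'' so that $\Xi$ is a singleton, and the argument becomes a two-line computation.
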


We next show in Theorem \ref{thm:consistency2} that for the case where the fairness constraint depends on the true label (e.g., \textsc{E.Opp.} and \textsc{E.Odd.}), our prediction procedure outputs a predictor distribution with the same desired characteristic, after being marginalized over the true label.

\begin{theorem}
\label{thm:consistency2}
For fairness constraints that depend on the true label, our inference procedure in Eq. \eqref{eq:pred-fp} produces the marginal predicting distribution $\mathbb{P}$ of the fair predictor distribution with the closest KL-divergence to $P(\xvec,a,y)$ in the limit.
\end{theorem}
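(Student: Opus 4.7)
The plan is to combine Theorem \ref{thm:consistency} (which characterizes the optimal label-conditional $\Pbb_{\theta^*,\lambda^*_{\theta^*}}(\yhat|\xvec,a,y)$ in the limit) with an asymptotic consistency property of the approximator $\Qbb$, and then show that the inference rule in Eq. \eqref{eq:pred-fp} recovers exactly the marginal of that fair label-conditional predictor.

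First, I would invoke Theorem \ref{thm:consistency} to conclude that, in the limit, $\Pbb_{\theta^*,\lambda^*_{\theta^*}}(\yhat|\xvec,a,y)$ is the label-conditional fair predictor minimizing KL-divergence from $P(\xvec,a,y)$. The target the proof must identify with the output of Eqs. \eqref{eq:pcheck-fp}--\eqref{eq:pred-fp} is the marginal
\[
\Pbb^\star(\yhat|\xvec,a) \triangleq \sum_{y} \Pbb_{\theta^*,\lambda^*_{\theta^*}}(\yhat|\xvec,a,y)\, P(y|\xvec,a),
\]
which inherits the fair/min-KL property after marginalization over $y$.

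Second, I would argue that under the population distribution and a fully expressive feature representation, the feature-matching constraint $\Xi$ forces the approximator to satisfy
\[
\sum_{y} P(y|\xvec,a)\, \Qbb_{\theta^*,\lambda^*_{\theta^*}}(\yhat|\xvec,a,y) = P(\yhat|\xvec,a),
\]
by reading $\Xi$ as a constraint on the joint expectation over $\Ptil(\xvec,a,y)\,\Qbb(\yhat|\xvec,a,y)$ and using full expressivity on $(\xvec,a)$ to promote moment matching into distributional matching. Since $\yhat$ and $y$ share support, $P(\yhat=c|\xvec,a) = P(y=c|\xvec,a)$, so $\alpha^\star \triangleq P(y=1|\xvec,a)$ is a fixed point of the affine equation
\[
\alpha = \Qbb_{\theta^*,\lambda^*_{\theta^*}}(\yhat=1|\xvec,a,y=1)\,\alpha + \Qbb_{\theta^*,\lambda^*_{\theta^*}}(\yhat=1|\xvec,a,y=0)\,(1-\alpha).
\]
This equation is linear (affine) in $\alpha$ and admits a unique solution, precisely the closed-form expression in Eq. \eqref{eq:pcheck-fp} whenever its denominator is nonzero; hence $\Qbb(\yhat=1|\xvec,a) = P(y=1|\xvec,a)$ asymptotically.

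Finally, substituting $\Qbb(y|\xvec,a) = P(y|\xvec,a)$ into Eq. \eqref{eq:pred-fp} yields exactly $\Pbb^\star(\yhat|\xvec,a)$, which by the first step is the marginal fair predictor with minimum KL-divergence from $P(\xvec,a,y)$. The main obstacle is the second step: one must carefully justify the marginalization identity for $\Qbb$ despite its conditioning on $y$, because full feature expressivity only pins down the $\Qbb$-marginal over $\yhat$ after averaging against $P(y|\xvec,a)$, not the conditionals $\Qbb(\yhat|\xvec,a,y)$ individually. The uniqueness-of-fixed-point argument is routine once this identity is in place, and the final substitution into Eq. \eqref{eq:pred-fp} is purely algebraic.
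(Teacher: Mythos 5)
Your proposal is correct and follows the same skeleton as the paper's proof---invoke Theorem \ref{thm:consistency} for the label-conditional optimum, argue that the $\Qbb$-based estimate of $P(y|\xvec,a)$ becomes exact in the limit, and substitute into Eq. \eqref{eq:pred-fp}---but your treatment of the middle step is genuinely more careful than what the paper writes. The paper simply asserts, citing the proof of Theorem \ref{thm:consistency}, that ``the approximator's distribution matches with the true distribution'' and concludes that Eq. \eqref{eq:pred-fp} is no longer an approximation. You correctly observe that full feature expressivity only forces the averaged identity $\sum_{y} P(y|\xvec,a)\,\Qbb(\yhat|\xvec,a,y)=P(y|\xvec,a)$ rather than pinning down each conditional $\Qbb(\yhat|\xvec,a,y)$ individually, and you then close the gap the paper glosses over by noting that $P(y=1|\xvec,a)$ satisfies the same affine equation whose unique solution is the closed form in Eq. \eqref{eq:pcheck-fp}; this is exactly the argument needed to justify $\Qbb(\yhat=1|\xvec,a)=P(y=1|\xvec,a)$ from the constraint set $\Xi$ alone. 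What your version buys is a proof that does not rely on the stronger (and not literally established) claim that $\Qbb$ matches $P$ pointwise in $y$; what it costs is the need to handle the degenerate case where the denominator $\Qbb(\yhat\!=\!0|\xvec,a,y\!=\!1)+\Qbb(\yhat\!=\!1|\xvec,a,y\!=\!0)$ vanishes (the approximator perfectly reveals the label, so the affine equation becomes $\alpha=\alpha$ and the fixed point is not unique), a corner case the paper also leaves unaddressed and which you should dispatch explicitly in a final write-up.
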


\section{Experiments}

\subsection{Illustrative behavior on synthetic data}

\begin{figure*}
{\small 
\hspace{-1cm}
$\Pbb(\Yhat=1|A=1)$
\hspace{5cm}
$\Pbb(\Yhat=1|A=0)$
\hspace{1cm}}
\centering
\includegraphics[width=.95\textwidth]{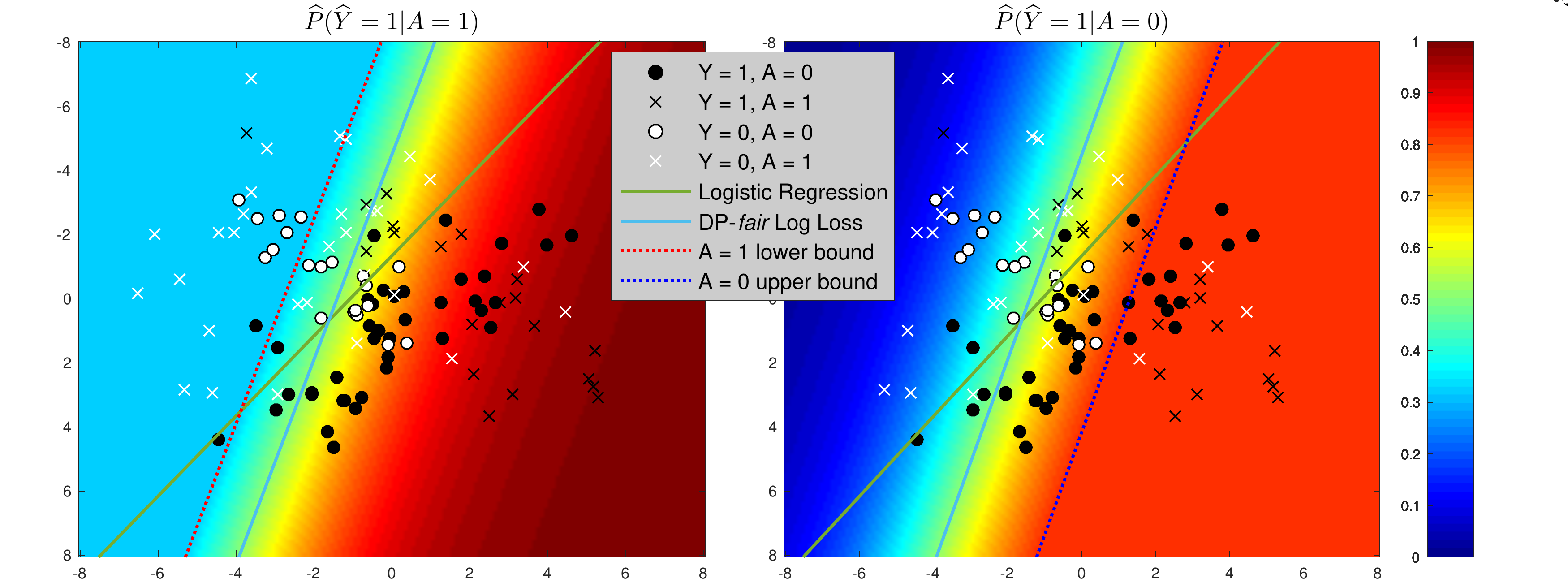}
     \caption{Experimental results on a synthetic dataset with: a heatmap indicating the predictive probabilities of our approach, along with decision and threshold boundaries; and the unfair logistic regression decision boundary. }
\label{fig:fair-predictions}
\end{figure*}

We illustrate the key differences between our model and logistic regression with demographic parity requirements on 2D synthetic data in Figure \ref{fig:fair-predictions}.
The predictive distribution includes different truncated probabilities for each group: raising the minimum probability for group $A=1$ and lowering the maximum probability for group $A=0$.
This permits a decision boundary that differs significantly from the logistic regression decision boundary and better realizes the desired fairness guarantees. 
In contrast, \emph{post-processing methods} using logistic regression as the base classifier \cite{hardt2016equality} are constrained to reshape the given unfair logistic regression predictions without shifting the decision boundary orientation, often leading to suboptimality \cite{woodworth2017learning}.

\subsection{Datasets}
We evaluate our proposed algorithm on three benchmark fairness datasets: 
\begin{enumerate}[leftmargin=*,itemsep=1pt,topsep=0pt,label=(\arabic*),itemindent=0pt]
\item
The \textbf{UCI Adult} \cite{Uci2017} dataset 
includes 45,222 samples with an income greater than \$50k considered to be a favorable binary outcome. We choose gender as the protected attribute, leaving 11 other features 
for each example.
\item
The ProPublica's \textbf{COMPAS} recidivism dataset 
\cite{larson2016we} contains 6,167 samples, and the task is to predict the recidivism of an individual based on criminal history, with the binary protected attribute being race (white and non-white) and an
additional nine features. 
\item The dataset from the Law School Admissions Council’s National Longitudinal Bar Passage Study \cite{Wightman98} has 20,649
examples. Here, the favorable outcome for the individual is passing the bar exam, with race (restricted to
white and black only) as the protected attribute, and 13 other features.
\end{enumerate}

\subsection{Comparison methods}

We compare our method (\emph{Fair Log-loss}) against various baseline/fair learning algorithms that are primarily based on logistic regression as the base classifier:
\begin{enumerate}[leftmargin=*,itemsep=0pt,topsep=0pt,label=(\arabic*),itemindent=0pt]
\item {\bf Unconstrained logistic regression} 
is a standard logistic regression model that ignores all fairness requirements. 
    \item The {\bf cost sensitive reduction approach} 
    by \citet{agarwal2018reductions} reduces fair classification to learning a randomized hypothesis over a sequence of cost-sensitive classifiers. We use the sample-weighted implementation of Logistic Regression in scikit-learn 
    as the base classifier, to compare the effect of the reduction approach. 
We evaluate the performance of the model by varying the constraint bounds across the set $\epsilon \in \{.001,.01,.1\}$. 

\item The {\bf constraint-based learning method}\footnote{\url{https://github.com/mbilalzafar/fair-classification}} of 
\cite{zafar2017aistats,zafar2017fairness} 
uses a covariance 
proxy measure to achieve equalized odds (under the name disparate mistreatment) \cite{zafar2017fairness}, and improve the disparate impact ratio
\cite{zafar2017aistats}, which we use as a baseline method to evaluate demographic parity violation. They cast the resulting non-convex optimization as a
disciplined convex-concave program in training time.
We use the logistic regression as the base classifier.

\item For demographic parity, we compare with the {\bf reweighting method} (\emph{reweighting}) of \citet{Kamiran2012}, which learns weights for each combination of class label and protected attribute and then uses these weights to resample from the original training data which yields a new dataset with no statistical dependence between class label and protected attribute. The new balanced dataset is then used for training a classifier. We use IBM AIF360 toolkit 
to run this method. 


\item For equalized odds, we also compare with the {\bf post-processing method} 
of \citet{hardt2016equality} which transforms the classifier's output by solving a linear program that finds a 
prediction minimizing misclassification errors and satisfying the equalized odds constraint from the set of probability formed by the convex hull of the original classifier's probabilities and the extreme point of probability values (i.e., zero and one).
\end{enumerate}
\begin{figure*}[t!]
    \begin{tabular}{c c c}
    \includegraphics[width=.30\textwidth]{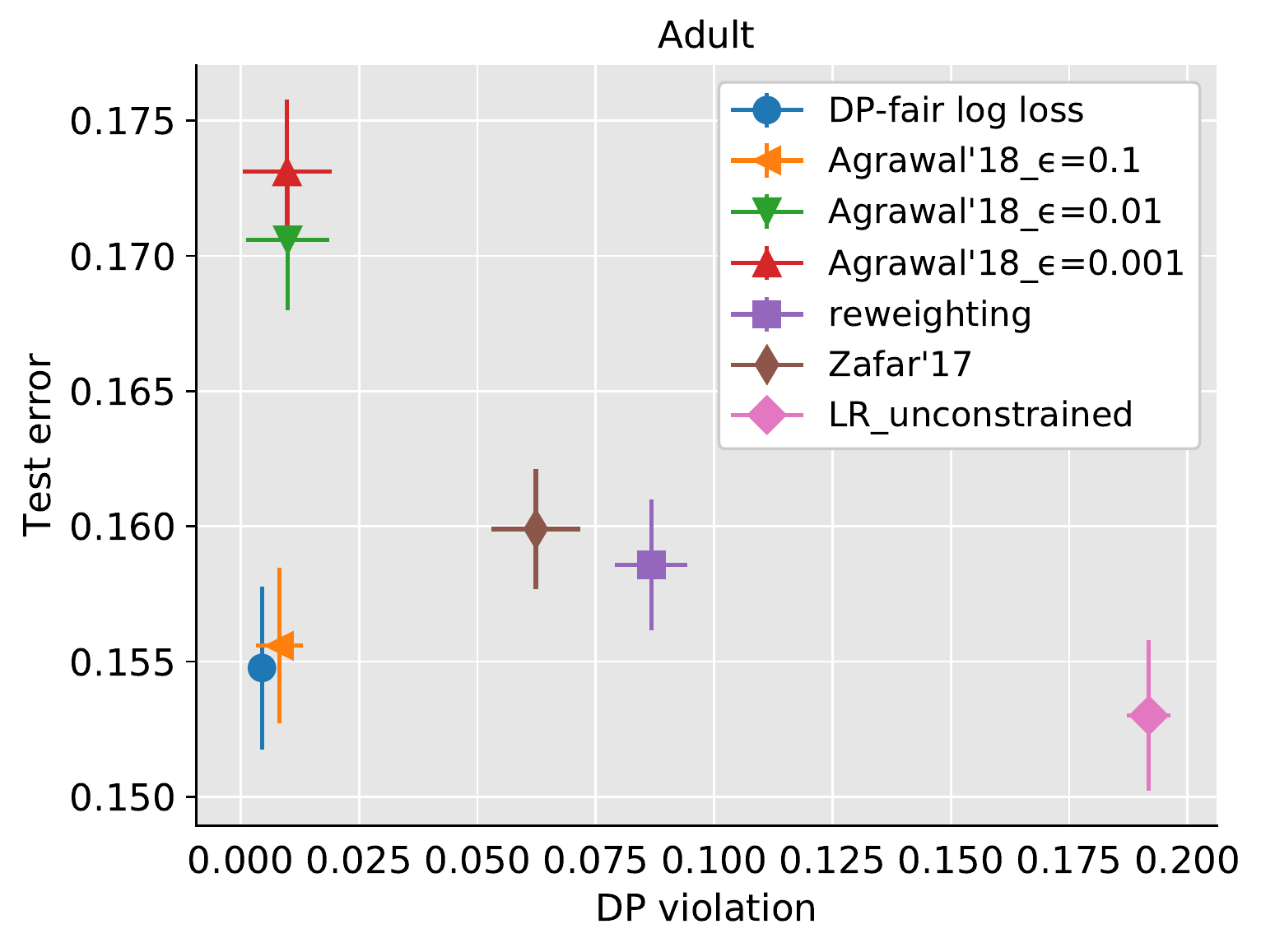}&
    \includegraphics[width=.30\textwidth]{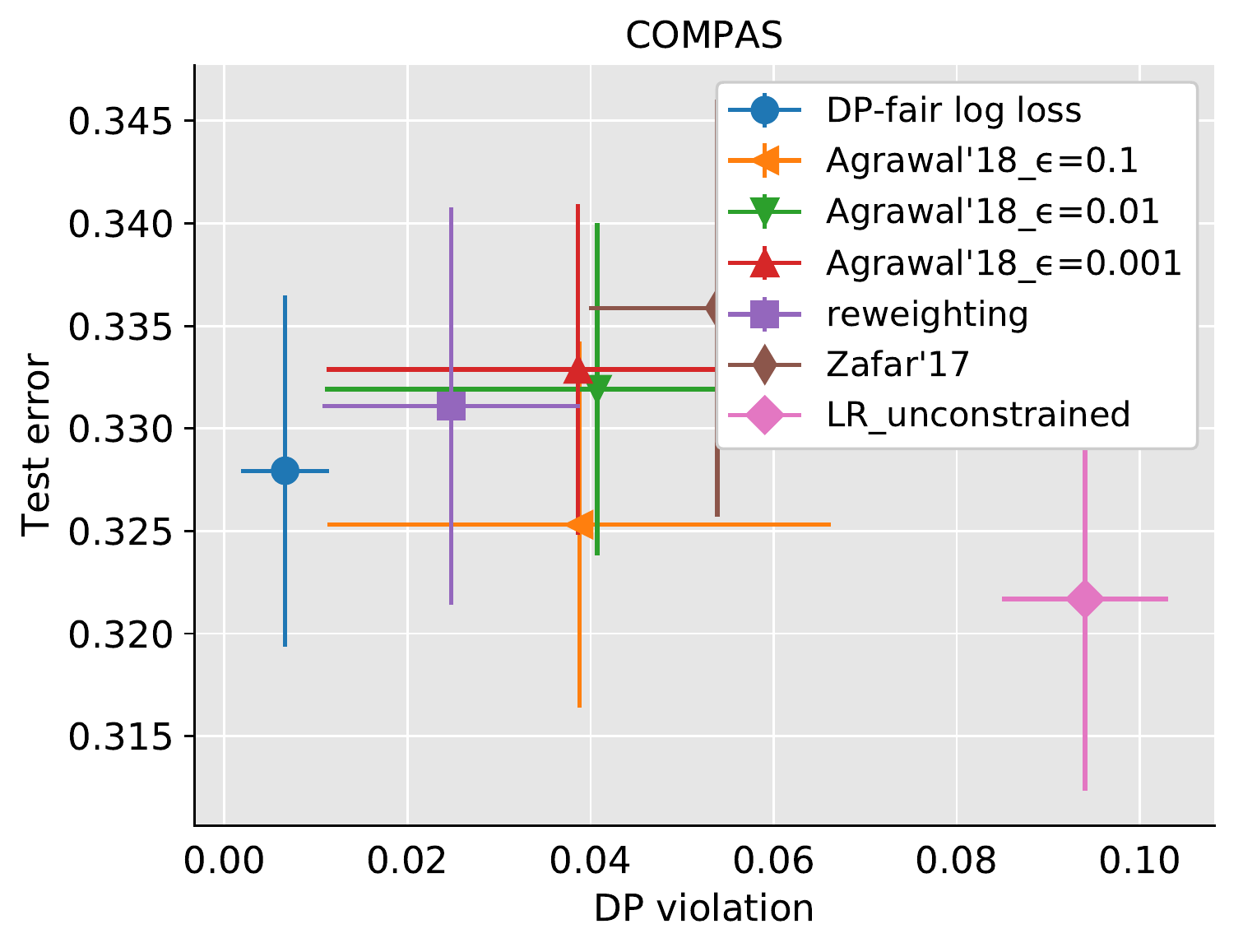}&
    \includegraphics[width=.30\textwidth]{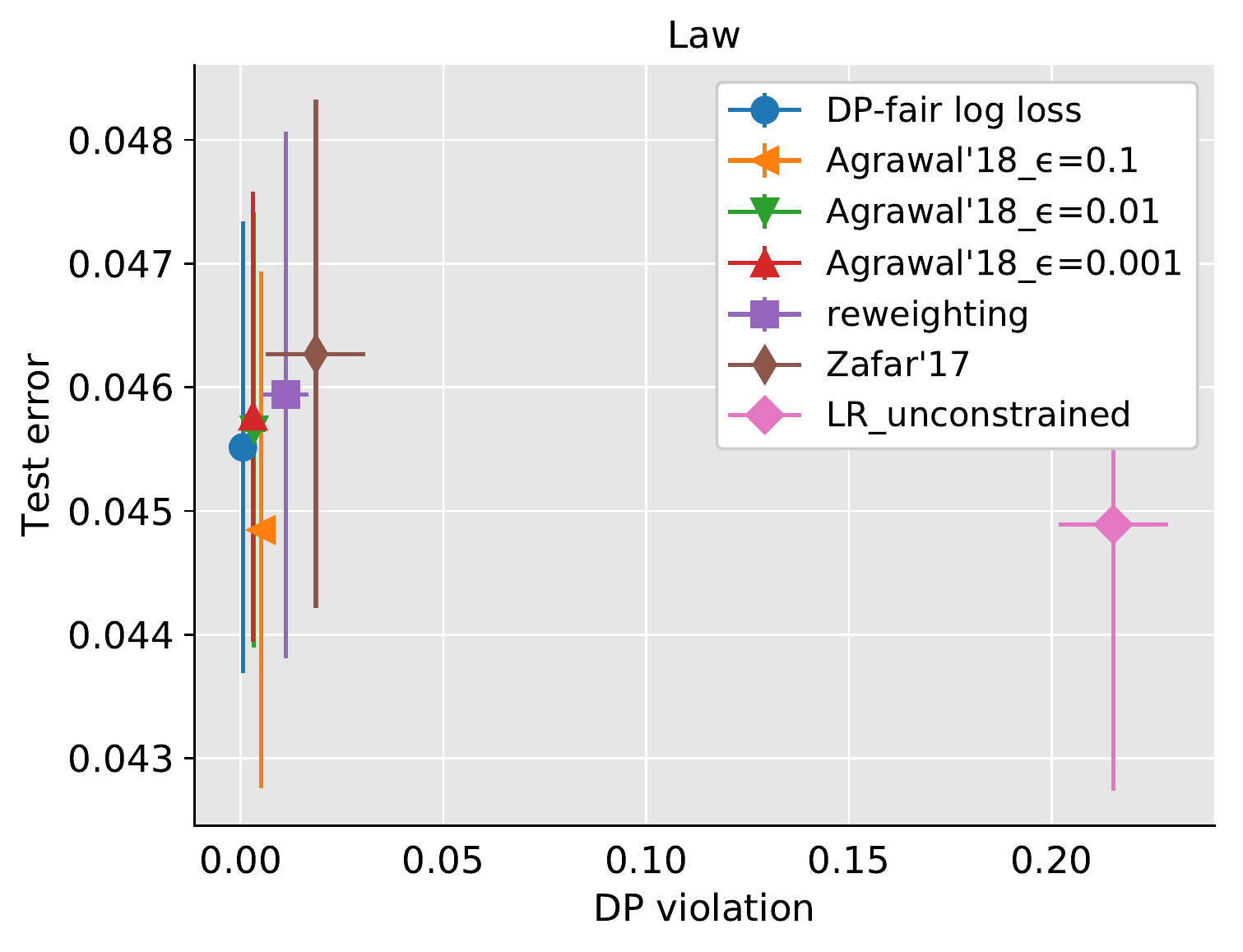}\\
    \includegraphics[width=.30\textwidth]{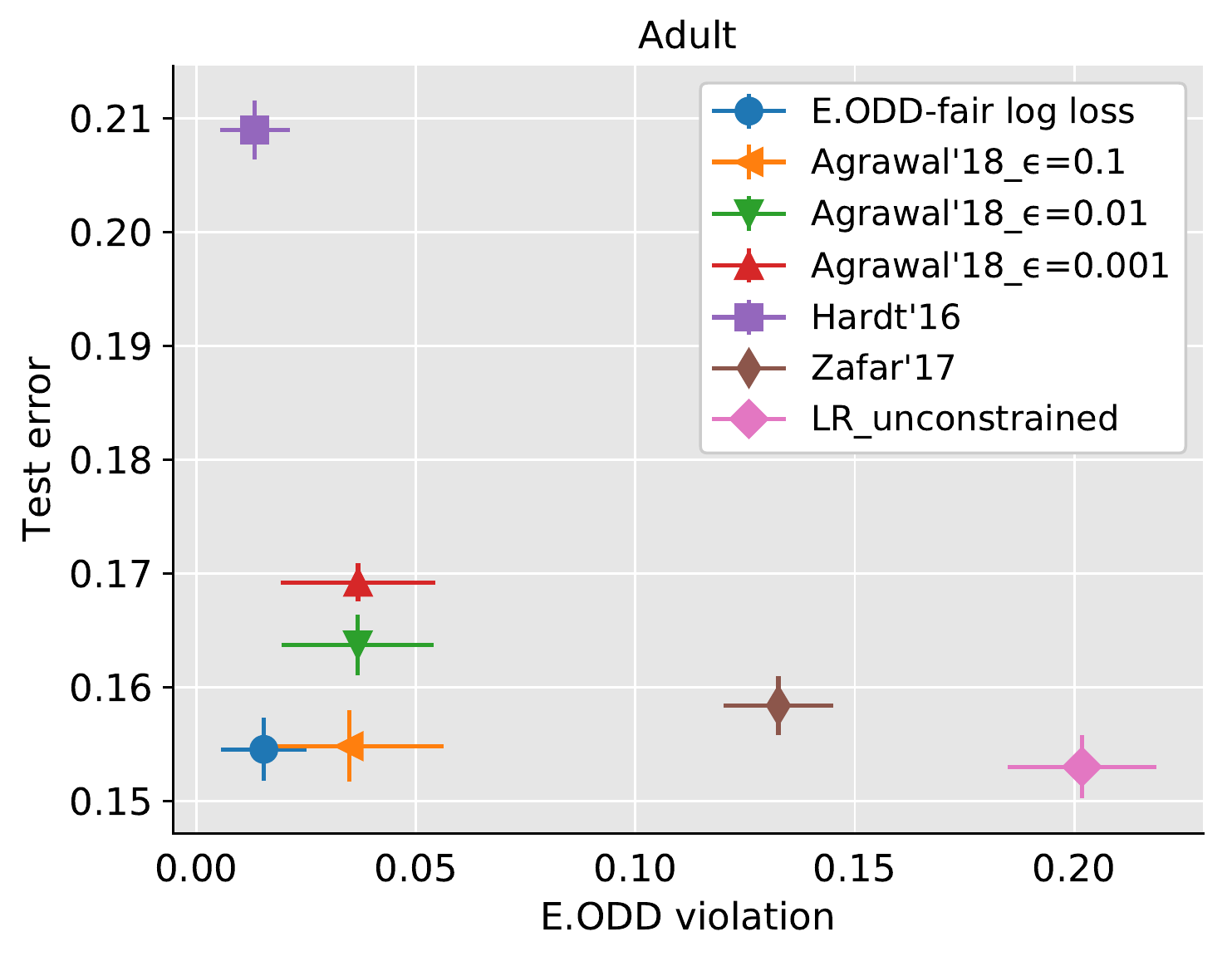}&
    \includegraphics[width=.30\textwidth]{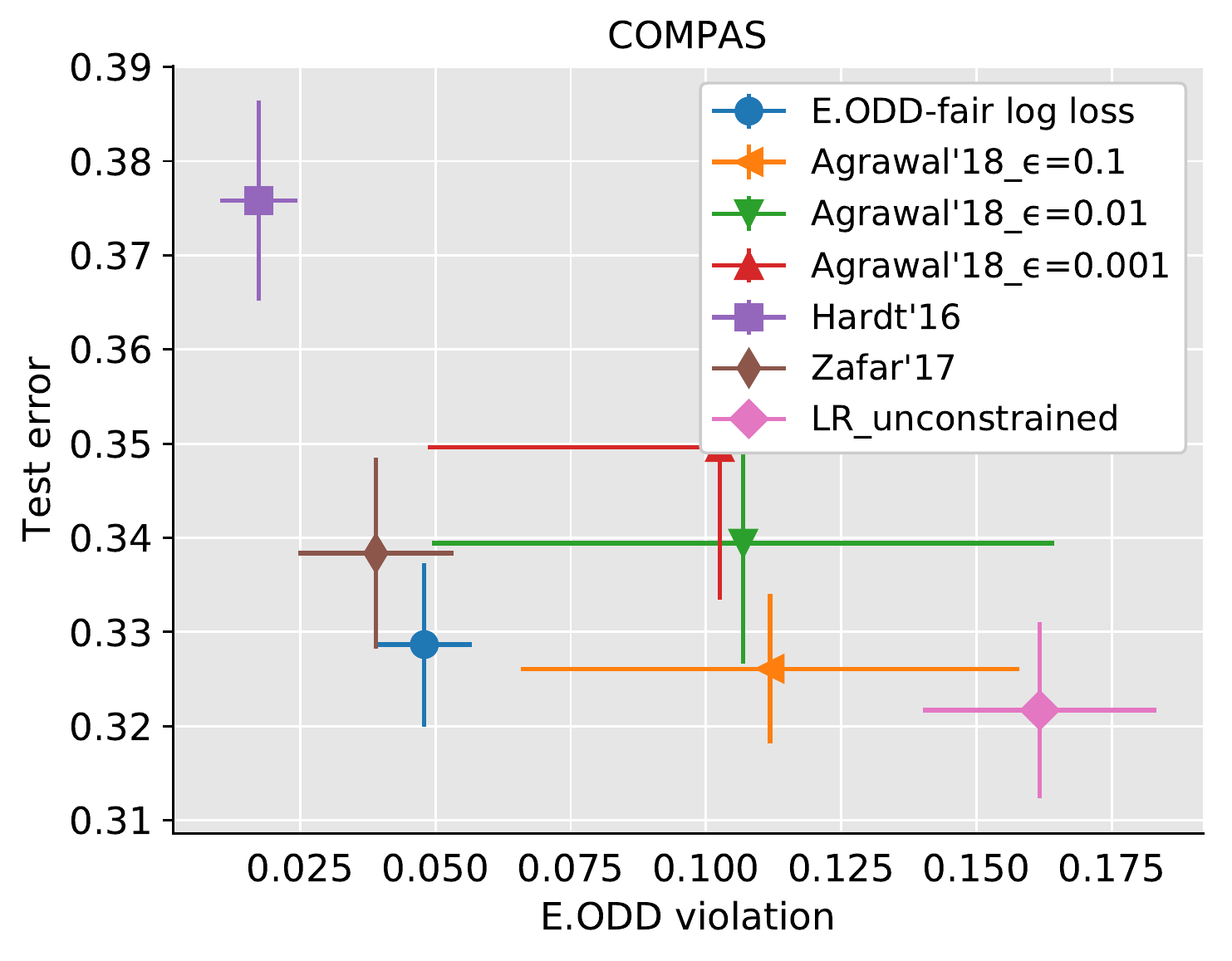}&
    \includegraphics[width=.30\textwidth]{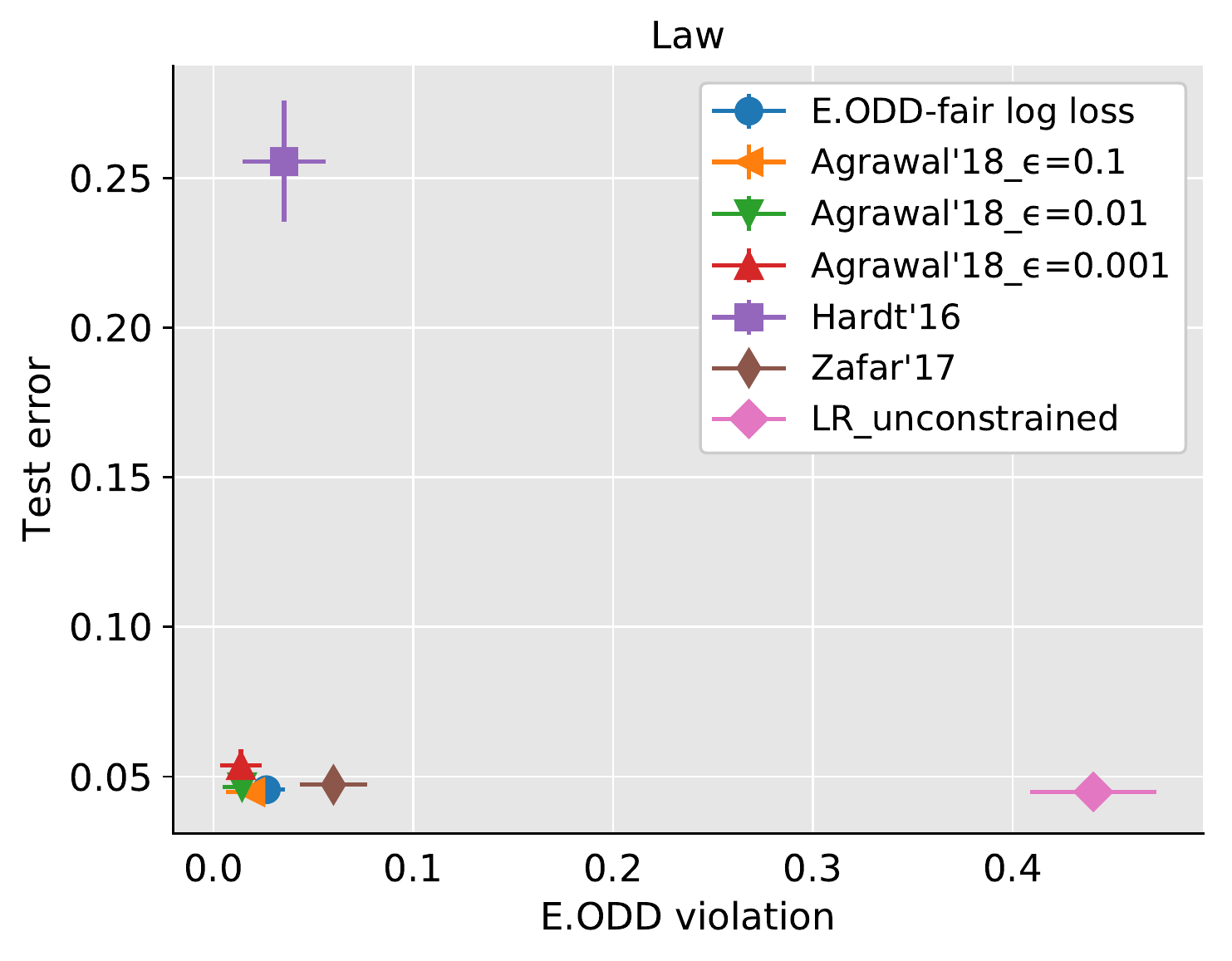}
    \end{tabular}
    \caption{\emph{Test classification error} versus \emph{Demographic Parity}  (top row) and \emph{Equalized Odds} (bottom row) constraint violations. The bars indicate standard deviation on 20 random splits of data.}
    \label{fig:test_result}
\end{figure*}

\subsection{Evaluation measures and setup}

Data-driven fair decision methods seek to minimize both prediction error rates and measures of unfairness.
We consider the misclassification rate (i.e., the 0-1 loss, $\mathbb{E}[\Yhat\neq Y]$) on a withheld test sample to measure prediction error.
To quantify the unfairness of each method, we measure the degree of fairness violation for demographic parity (\textsc{D.P.}) as:
$
\big|\mathbb{E}
[\Ibb(\Yhat=1) | A = 1] - \mathbb{E}[\Ibb(\Yhat=1)|A = 0]\big|,
$
and the sum of fairness violations for each class to measure the total violation for equalized odds (\textsc{E.Odd.}) as:
$
\sum_{y \in \{0,1\}} \big( \big|\mathbb{E} [\Ibb(\Yhat=1) |A = 1, Y = y] - 
\mathbb{E} [\Ibb(\Yhat=1) |A = 0, Y = y] \big|\big),
$
to obtain a level comparison across different methods.
We follow the methodology of \citet{agarwal2018reductions} to give all methods access to the protected attribute both at training and testing time by including the protected attribute in the feature vector. 
We perform all of our experiments using 20 random splits of each dataset into a training set (70\% of examples) and a testing set (30\%). We record the averages over these twenty random splits and the standard deviation.
We cross validate our model on a separate validation set using the best logloss to select an L2 penalty from ($\{.001, .005, .01,.05, .1, .2, .3, .4, .5\}$).

\subsection{Experimental Results}

Figure~\ref{fig:test_result} provides the evaluation results (test error and fairness violation) of each method for 
demographic parity and equalized odds on test data from each of the three datasets 
Fairness can be vacuously achieved by an agnostic predictor that always outputs labels according to independent (biased) coin flips.
Thus, the appropriate question to ask when considering these results is: ``how much additional test error is incurred compared to the baseline of the unfair logistic regression model for how much of an increase in fairness?''

For demographic parity on the Adult dataset, our \emph{Fair Log-loss} approach outperforms all baseline methods on average for both test error rate and for fairness violation, and on COMPAS dataset it achieves the lowest ratio of increased fairness over increased error.
Additionally, the increase in test error over the unfair unconstrained logistic regression model is small.
For demographic parity on the Law dataset, the relationship between methods is not as clear, but our \emph{Fair Log-loss} approach still resides in the Pareto optimal set, i.e., there are no other methods that are significantly better than our result on both criteria.
For equalized odds, \emph{Fair Log-loss} provides the lowest ratios of increased fairness over increased error rate for the Adult and COMPAS datasets, and competitive performance on the Law dataset.
The post-processing method provides comparable or better fairness at the cost of significantly higher error rates. This shows that the approximation in our prediction procedure 
does not significantly impact the performance of our method. 
In terms of the running time, 
our method is an order of magnitude faster 
than comparable methods
(e.g.,  
the train and test running time on one random split of the Adult dataset takes approximately 5 seconds by our algorithm, 80 seconds for the constraint-based method \cite{zafar2017aistats}, and 100 seconds for the reduction-based method \citep{agarwal2018reductions}).

\section{Conclusions and Future Work}

We have developed a novel approach for providing fair data-driven decision making in this work by deriving a new classifier from the first principles of distributionally robust estimation \cite{topsoe1979information,grunwald2004game,delage2010distributionally}.
We formulated a learning objective that imposes fairness requirements on the predictor and views uncertainty about the population distribution pessimistically while maintaining a semblance of the training data characteristics through feature-matching constraints. This resulted in a parametric exponential family conditional distribution that resemble a truncated logistic regression model.

In future work, we plan to investigate the setting in which group membership attributes are not available at testing time.
Extending our approach using a plug-in estimator of $P(a|{\bf x})$ in the fairness constraints introduces this estimator in the parametric form of the model.
Understanding the impact of error from this estimator on predictor fairness in both theory and practice is an important direction of research.


\section*{Acknowledgments}

This work was supported, in part, by the National Science Foundation under Grant No. 1652530 and by the Future of Life
Institute (futureoflife.org) FLI-RFP-AI1 program.
{
\small
\bibliographystyle{aaai}
\bibliography{AAAI-RezaeiA.7734}

\begin{thebibliography}{}

\bibitem[\protect\citeauthoryear{Adel \bgroup et al\mbox.\egroup
  }{2019}]{adel2019one}
Adel, T.; Valera, I.; Ghahramani, Z.; and Weller, A.
\newblock 2019.
\newblock One-network adversarial fairness.
\newblock In {\em AAAI}.

\bibitem[\protect\citeauthoryear{Agarwal \bgroup et al\mbox.\egroup
  }{2018}]{agarwal2018reductions}
Agarwal, A.; Beygelzimer, A.; Dud{\'{\i}}k, M.; Langford, J.; and Wallach,
  H.~M.
\newblock 2018.
\newblock A reductions approach to fair classification.
\newblock In {\em ICML}.

\bibitem[\protect\citeauthoryear{Bechavod and
  Ligett}{2017}]{bechavod2017penalizing}
Bechavod, Y., and Ligett, K.
\newblock 2017.
\newblock Penalizing unfairness in binary classification.
\newblock {\em arXiv preprint arXiv:1707.00044}.

\bibitem[\protect\citeauthoryear{Boyd and Vandenberghe}{2004}]{boyd2004convex}
Boyd, S., and Vandenberghe, L.
\newblock 2004.
\newblock {\em Convex optimization}.
\newblock Cambridge university press.

\bibitem[\protect\citeauthoryear{Calders, Kamiran, and
  Pechenizkiy}{2009}]{calders2009building}
Calders, T.; Kamiran, F.; and Pechenizkiy, M.
\newblock 2009.
\newblock Building classifiers with independency constraints.
\newblock In {\em ICDMW '09}.

\bibitem[\protect\citeauthoryear{Calmon \bgroup et al\mbox.\egroup
  }{2017}]{Calmon2017}
Calmon, F.; Wei, D.; Vinzamuri, B.; Natesan~Ramamurthy, K.; and Varshney, K.~R.
\newblock 2017.
\newblock Optimized pre-processing for discrimination prevention.
\newblock In {\em NeurIPS}.

\bibitem[\protect\citeauthoryear{Carter and
  Catlett}{1987}]{carter1987assessing}
Carter, C., and Catlett, J.
\newblock 1987.
\newblock Assessing credit card applications using machine learning.
\newblock {\em IEEE Expert}.

\bibitem[\protect\citeauthoryear{Celis and Keswani}{2019}]{celis2019improved}
Celis, L.~E., and Keswani, V.
\newblock 2019.
\newblock Improved adversarial learning for fair classification.
\newblock {\em arXiv preprint}.

\bibitem[\protect\citeauthoryear{Celis \bgroup et al\mbox.\egroup
  }{2019}]{celis2019classification}
Celis, L.~E.; Huang, L.; Keswani, V.; and Vishnoi, N.~K.
\newblock 2019.
\newblock Classification with fairness constraints: A meta-algorithm with
  provable guarantees.
\newblock In {\em ACM FAT*}.

\bibitem[\protect\citeauthoryear{Chandramouli and
  Goldstein}{2014}]{chandramouli2014patience}
Chandramouli, B., and Goldstein, J.
\newblock 2014.
\newblock Patience is a virtue: Revisiting merge and sort on modern processors.
\newblock In {\em ACM SIGMOD}.

\bibitem[\protect\citeauthoryear{Chang}{2006}]{chang2006applying}
Chang, L.
\newblock 2006.
\newblock Applying data mining to predict college admissions yield: A case
  study.
\newblock {\em NDIR}.

\bibitem[\protect\citeauthoryear{Cotter \bgroup et al\mbox.\egroup
  }{2018}]{cotter2018}
Cotter, A.; Jiang, H.; Wang, S.; Narayan, T.; Gupta, M.; You, S.; and
  Sridharan, K.
\newblock 2018.
\newblock Optimization with non-differentiable constraints with applications to
  fairness, recall, churn, and other goals.
\newblock {\em arXiv preprint}.

\bibitem[\protect\citeauthoryear{Del~Barrio \bgroup et al\mbox.\egroup
  }{2018}]{del2018obtaining}
Del~Barrio, E.; Gamboa, F.; Gordaliza, P.; and Loubes, J.-M.
\newblock 2018.
\newblock Obtaining fairness using optimal transport theory.
\newblock {\em arXiv preprint}.

\bibitem[\protect\citeauthoryear{Delage and
  Ye}{2010}]{delage2010distributionally}
Delage, E., and Ye, Y.
\newblock 2010.
\newblock Distributionally robust optimization under moment uncertainty with
  application to data-driven problems.
\newblock {\em Operations research}.

\bibitem[\protect\citeauthoryear{Dheeru and Karra~Taniskidou}{2017}]{Uci2017}
Dheeru, D., and Karra~Taniskidou, E.
\newblock 2017.
\newblock {UCI} machine learning repository.

\bibitem[\protect\citeauthoryear{Donini \bgroup et al\mbox.\egroup
  }{2018}]{donini2018empirical}
Donini, M.; Oneto, L.; Ben-David, S.; Shawe-Taylor, J.~S.; and Pontil, M.
\newblock 2018.
\newblock Empirical risk minimization under fairness constraints.
\newblock In {\em NeurIPS}.

\bibitem[\protect\citeauthoryear{Dwork \bgroup et al\mbox.\egroup
  }{2012}]{dwork2012fairness}
Dwork, C.; Hardt, M.; Pitassi, T.; Reingold, O.; and Zemel, R.
\newblock 2012.
\newblock Fairness through awareness.
\newblock In {\em ITCS}.

\bibitem[\protect\citeauthoryear{Fathony \bgroup et al\mbox.\egroup
  }{2018a}]{fathony2018consistent}
Fathony, R.; Asif, K.; Liu, A.; Bashiri, M.~A.; Xing, W.; Behpour, S.; Zhang,
  X.; and Ziebart, B.~D.
\newblock 2018a.
\newblock Consistent robust adversarial prediction for general multiclass
  classification.
\newblock {\em arXiv preprint}.

\bibitem[\protect\citeauthoryear{Fathony \bgroup et al\mbox.\egroup
  }{2018b}]{fathony2018distributionally}
Fathony, R.; Rezaei, A.; Bashiri, M.; Zhang, X.; and Ziebart, B.~D.
\newblock 2018b.
\newblock Distributionally robust graphical models.
\newblock In {\em NeurIPS}.

\bibitem[\protect\citeauthoryear{Feldman \bgroup et al\mbox.\egroup
  }{2015}]{feldman2015certifying}
Feldman, M.; Friedler, S.~A.; Moeller, J.; Scheidegger, C.; and
  Venkatasubramanian, S.
\newblock 2015.
\newblock Certifying and removing disparate impact.
\newblock In {\em ACM SIGKDD}.

\bibitem[\protect\citeauthoryear{Goel, Yaghini, and
  Faltings}{2018}]{goel2018non}
Goel, N.; Yaghini, M.; and Faltings, B.
\newblock 2018.
\newblock Non-discriminatory machine learning through convex fairness criteria.
\newblock In {\em AAAI}.

\bibitem[\protect\citeauthoryear{Gr\"unwald and Dawid}{2004}]{grunwald2004game}
Gr\"unwald, P.~D., and Dawid, A.~P.
\newblock 2004.
\newblock Game theory, maximum entropy, minimum discrepancy, and robust
  {B}ayesian decision theory.
\newblock {\em Annals of Statistics} 32.

\bibitem[\protect\citeauthoryear{Hacker and
  Wiedemann}{2017}]{hacker2017continuous}
Hacker, P., and Wiedemann, E.
\newblock 2017.
\newblock A continuous framework for fairness.
\newblock {\em arXiv preprint}.

\bibitem[\protect\citeauthoryear{Hardt, Price, and
  Srebro}{2016}]{hardt2016equality}
Hardt, M.; Price, E.; and Srebro, N.
\newblock 2016.
\newblock Equality of opportunity in supervised learning.
\newblock In {\em NeurIPS}.

\bibitem[\protect\citeauthoryear{Jaynes}{1957}]{jaynes1957information}
Jaynes, E.~T.
\newblock 1957.
\newblock Information theory and statistical mechanics.
\newblock {\em Physical review} 106(4).

\bibitem[\protect\citeauthoryear{Kabakchieva}{2013}]{kabakchieva2013predicting}
Kabakchieva, D.
\newblock 2013.
\newblock Predicting student performance by using data mining methods for
  classification.
\newblock {\em Cybernetics and Information Technologies} 13(1).

\bibitem[\protect\citeauthoryear{Kamiran and Calders}{2012}]{Kamiran2012}
Kamiran, F., and Calders, T.
\newblock 2012.
\newblock Data preprocessing techniques for classification without
  discrimination.
\newblock {\em Knowledge and Information Systems} 33(1).

\bibitem[\protect\citeauthoryear{Kamishima, Akaho, and
  Sakuma}{2011}]{kamishima2011fairness}
Kamishima, T.; Akaho, S.; and Sakuma, J.
\newblock 2011.
\newblock Fairness-aware learning through regularization approach.
\newblock In {\em ICDMW}.

\bibitem[\protect\citeauthoryear{Krasanakis \bgroup et al\mbox.\egroup
  }{2018}]{krasanakis2018adaptive}
Krasanakis, E.; Spyromitros-Xioufis, E.; Papadopoulos, S.; and Kompatsiaris, Y.
\newblock 2018.
\newblock Adaptive sensitive reweighting to mitigate bias in fairness-aware
  classification.
\newblock In {\em WWW}.

\bibitem[\protect\citeauthoryear{Larson \bgroup et al\mbox.\egroup
  }{2016}]{larson2016we}
Larson, J.; Mattu, S.; Kirchner, L.; and Angwin, J.
\newblock 2016.
\newblock How we analyzed the compas recidivism algorithm.
\newblock {\em ProPublica}.

\bibitem[\protect\citeauthoryear{Lisman and Zuylen}{1972}]{lisman1972note}
Lisman, J., and Zuylen, M.~v.
\newblock 1972.
\newblock Note on the generation of most probable frequency distributions.
\newblock {\em Statistica Neerlandica} 26(1).

\bibitem[\protect\citeauthoryear{Liu and Ziebart}{2014}]{liu2014robust}
Liu, A., and Ziebart, B.
\newblock 2014.
\newblock Robust classification under sample selection bias.
\newblock In {\em NeurIPS}.

\bibitem[\protect\citeauthoryear{Lohr}{2013}]{lohr2013big}
Lohr, S.
\newblock 2013.
\newblock Big data, trying to build better workers.
\newblock {\em The New York Times} 21.

\bibitem[\protect\citeauthoryear{Madras \bgroup et al\mbox.\egroup
  }{2018}]{madras2018learning}
Madras, D.; Creager, E.; Pitassi, T.; and Zemel, R.
\newblock 2018.
\newblock Learning adversarially fair and transferable representations.
\newblock {\em arXiv preprint}.

\bibitem[\protect\citeauthoryear{Manning and
  Klein}{2003}]{manning2003optimization}
Manning, C., and Klein, D.
\newblock 2003.
\newblock Optimization, maxent models, and conditional estimation without
  magic.
\newblock In {\em NAACL}.

\bibitem[\protect\citeauthoryear{Menon and Williamson}{2018}]{menon2018cost}
Menon, A.~K., and Williamson, R.~C.
\newblock 2018.
\newblock The cost of fairness in binary classification.
\newblock In {\em ACM FAT*}.

\bibitem[\protect\citeauthoryear{Moses and Chan}{2014}]{moses2014using}
Moses, L.~B., and Chan, J.
\newblock 2014.
\newblock Using big data for legal and law enforcement decisions: Testing the
  new tools.
\newblock {\em UNSWLJ}.

\bibitem[\protect\citeauthoryear{Obermeyer and
  Emanuel}{2016}]{obermeyer2016predicting}
Obermeyer, Z., and Emanuel, E.~J.
\newblock 2016.
\newblock Predicting the future—big data, machine learning, and clinical
  medicine.
\newblock {\em The New England Journal of Medicine} 375(13).

\bibitem[\protect\citeauthoryear{O'Neil}{2016}]{o2016weapons}
O'Neil, C.
\newblock 2016.
\newblock {\em Weapons of math destruction: How big data increases inequality
  and threatens democracy}.
\newblock Broadway Books.

\bibitem[\protect\citeauthoryear{Pleiss \bgroup et al\mbox.\egroup
  }{2017}]{Pleiss17}
Pleiss, G.; Raghavan, M.; Wu, F.; Kleinberg, J.; and Weinberger, K.~Q.
\newblock 2017.
\newblock On fairness and calibration.
\newblock In {\em NeurIPS}.

\bibitem[\protect\citeauthoryear{Quadrianto and
  Sharmanska}{2017}]{quadrianto2017recycling}
Quadrianto, N., and Sharmanska, V.
\newblock 2017.
\newblock Recycling privileged learning and distribution matching for fairness.
\newblock In {\em NeurIPS}.

\bibitem[\protect\citeauthoryear{Shaw and Gentry}{1988}]{shaw1988using}
Shaw, M.~J., and Gentry, J.~A.
\newblock 1988.
\newblock Using an expert system with inductive learning to evaluate business
  loans.
\newblock {\em Financial Management}.

\bibitem[\protect\citeauthoryear{Shipp \bgroup et al\mbox.\egroup
  }{2002}]{shipp2002diffuse}
Shipp, M.~A.; Ross, K.~N.; Tamayo, P.; Weng, A.~P.; Kutok, J.~L.; Aguiar,
  R.~C.; Gaasenbeek, M.; Angelo, M.; Reich, M.; Pinkus, G.~S.; et~al.
\newblock 2002.
\newblock Diffuse large b-cell lymphoma outcome prediction by gene-expression
  profiling and supervised machine learning.
\newblock {\em Nature medicine} 8(1).

\bibitem[\protect\citeauthoryear{Tops{\o}e}{1979}]{topsoe1979information}
Tops{\o}e, F.
\newblock 1979.
\newblock Information-theoretical optimization techniques.
\newblock {\em Kybernetika} 15(1):8--27.

\bibitem[\protect\citeauthoryear{Wightman}{1998}]{Wightman98}
Wightman, L.~F.
\newblock 1998.
\newblock {LSAC} national longitudinal bar passage study.

\bibitem[\protect\citeauthoryear{Woodworth \bgroup et al\mbox.\egroup
  }{2017}]{woodworth2017learning}
Woodworth, B.; Gunasekar, S.; Ohannessian, M.~I.; and Srebro, N.
\newblock 2017.
\newblock Learning non-discriminatory predictors.
\newblock In {\em COLT}.

\bibitem[\protect\citeauthoryear{Xu \bgroup et al\mbox.\egroup
  }{2018}]{xu2018fairgan}
Xu, D.; Yuan, S.; Zhang, L.; and Wu, X.
\newblock 2018.
\newblock Fair{GAN}: Fairness-aware generative adversarial networks.
\newblock In {\em IEEE Big Data}.

\bibitem[\protect\citeauthoryear{Zafar \bgroup et al\mbox.\egroup
  }{2017a}]{zafar2017fairness}
Zafar, M.~B.; Valera, I.; Gomez~Rodriguez, M.; and Gummadi, K.~P.
\newblock 2017a.
\newblock Fairness beyond disparate treatment \& disparate impact: Learning
  classification without disparate mistreatment.
\newblock In {\em WWW}.

\bibitem[\protect\citeauthoryear{Zafar \bgroup et al\mbox.\egroup
  }{2017b}]{zafar2017parity}
Zafar, M.~B.; Valera, I.; Rodriguez, M.; Gummadi, K.; and Weller, A.
\newblock 2017b.
\newblock From parity to preference-based notions of fairness in
  classification.
\newblock In {\em NeurIPS}.

\bibitem[\protect\citeauthoryear{Zafar \bgroup et al\mbox.\egroup
  }{2017c}]{zafar2017aistats}
Zafar, M.~B.; Valera, I.; Rogriguez, M.~G.; and Gummadi, K.~P.
\newblock 2017c.
\newblock Fairness constraints: Mechanisms for fair classification.
\newblock In {\em AISTATS}.

\bibitem[\protect\citeauthoryear{Zemel \bgroup et al\mbox.\egroup
  }{2013}]{Zemel13}
Zemel, R.; Wu, Y.; Swersky, K.; Pitassi, T.; and Dwork, C.
\newblock 2013.
\newblock Learning fair representations.
\newblock In {\em ICML}.

\bibitem[\protect\citeauthoryear{Zhang, Lemoine, and
  Mitchell}{2018}]{zhang2018mitigating}
Zhang, B.~H.; Lemoine, B.; and Mitchell, M.
\newblock 2018.
\newblock Mitigating unwanted biases with adversarial learning.
\newblock In {\em AIES}.

\bibitem[\protect\citeauthoryear{Zhang, Wu, and Wu}{2018}]{zhang2018achieving}
Zhang, L.; Wu, Y.; and Wu, X.
\newblock 2018.
\newblock Achieving non-discrimination in prediction.
\newblock In {\em IJCAI}.

\end{thebibliography}
}


\titleformat{\section}{\large\bfseries}{\appendixname~\thesection .}{0.5em}{}
\appendix
\onecolumn


\noindent {\LARGE Supplementary Materials}

\section{Proofs}

\subsection{Proof of Theorem \ref{thm:parametric}}

\begin{proof}[Proof of Theorem \ref{thm:parametric}]

Following the strong minimax duality analysis in "log-loss game" \citep{topsoe1979information,grunwald2004game}, and strong Lagrange duality in convex optimization \citep{boyd2004convex}, we perform the following transformations:
\begin{align}
    &\min_{\Pbb \in  \Delta \cap \Gamma} \max_{\Qbb \in \Delta \cap \Xi} \Ebb_{
        \substack{\Ptil(\xvec,a,y)\\
            \Pchk(\yhat|\xvec,a,y)} } 
            \left[-\log \Pbb(\Yhat|\Xbf,A,Y)\right] \\
    \overset{(a)}{=}& \max_{\Qbb \in \Delta \cap \Xi} \min_{\Pbb \in  \Delta \cap \Gamma} \Ebb_{
        \substack{\Ptil(\xvec,a,y)\\
            \Pchk(\yhat|\xvec,a,y)} } 
            \left[-\log \Pbb(\Yhat|\Xbf,A,Y)\right] \\
    \overset{(b)}{=}& \max_{\Qbb \in \Delta}
    \min_\theta
    \min_{\Pbb \in  \Delta \cap \Gamma} \Ebb_{
        \substack{\Ptil(\xvec,a,y)\\
            \Pchk(\yhat|\xvec,a,y)} } 
            \left[-\log \Pbb(\Yhat|\Xbf,A,Y)\right] + \theta^\top \Big( \Ebb_{ \substack{ \Ptil(\xvec,a,y) \\
            \Pchk(\yhat|\xvec,a,y)}
    }[\phi(\Xbf,\Yhat)]
    - \Ebb_{\Ptil(\xvec,a,y)}\left[\phi(\Xbf,Y) \right] \Big) \\
    \overset{(c)}{=}& \min_\theta \max_{\Qbb \in \Delta}
    \min_{\Pbb \in  \Delta \cap \Gamma} \Ebb_{
        \substack{\Ptil(\xvec,a,y)\\
            \Pchk(\yhat|\xvec,a,y)} } 
            \left[-\log \Pbb(\Yhat|\Xbf,A,Y)\right] + \theta^\top \Big( \Ebb_{ \substack{ \Ptil(\xvec,a,y) \\
            \Pchk(\yhat|\xvec,a,y)}
    }[\phi(\Xbf,\Yhat)]
    - \Ebb_{\Ptil(\xvec,a,y)}\left[\phi(\Xbf,Y) \right] \Big) \\
    \overset{(d)}{=}& \min_\theta \min_{\Pbb \in  \Delta \cap \Gamma} \max_{\Qbb \in \Delta}
    \Ebb_{
        \substack{\Ptil(\xvec,a,y)\\
            \Pchk(\yhat|\xvec,a,y)} } 
            \left[-\log \Pbb(\Yhat|\Xbf,A,Y)\right] + \theta^\top \Big( \Ebb_{ \substack{ \Ptil(\xvec,a,y) \\
            \Pchk(\yhat|\xvec,a,y)}
    }[\phi(\Xbf,\Yhat)]
    - \Ebb_{\Ptil(\xvec,a,y)}\left[\phi(\Xbf,Y) \right] \Big) \\
    \overset{(e)}{=}& \min_\theta \min_{\Pbb \in  \Delta} \max_\lambda \max_{\Qbb \in \Delta}
    \Ebb_{
        \substack{\Ptil(\xvec,a,y)\\
            \Pchk(\yhat|\xvec,a,y)} } 
            \left[-\log \Pbb(\Yhat|\Xbf,A,Y)\right] + \theta^\top \Big( \Ebb_{ \substack{ \Ptil(\xvec,a,y) \\
            \Pchk(\yhat|\xvec,a,y)}
    }[\phi(\Xbf,\Yhat)]
    - \Ebb_{\Ptil(\xvec,a,y)}\left[\phi(\Xbf,Y) \right] \Big) \\
    & \qquad \qquad + \lambda \Big( \tfrac{1}{p_{\gamma_1}} \Ebb_{ \substack{ \Ptil(\xvec,a,y) \\
            \Pbb(\yhat|\xvec,a,y)}
    }[\Ibb(\Yhat\!=\!1\wedge \gamma_1(A,Y))] 
    - \tfrac{1}{p_{\gamma_0}} \Ebb_{ \substack{ \Ptil(\xvec,a,y) \\
            \Pbb(\yhat|\xvec,a,y)}
    }[\Ibb(\Yhat\!=\!1 \wedge \gamma_0(A,Y))]  \Big) \notag \\
    \overset{(f)}{=}& \min_\theta \max_\lambda \min_{\Pbb \in  \Delta}  \max_{\Qbb \in \Delta}
    \Ebb_{
        \substack{\Ptil(\xvec,a,y)\\
            \Pchk(\yhat|\xvec,a,y)} } 
            \left[-\log \Pbb(\Yhat|\Xbf,A,Y)\right] + \theta^\top \Big( \Ebb_{ \substack{ \Ptil(\xvec,a,y) \\
            \Pchk(\yhat|\xvec,a,y)}
    }[\phi(\Xbf,\Yhat)]
    - \Ebb_{\Ptil(\xvec,a,y)}\left[\phi(\Xbf,Y) \right] \Big) \\
    & \qquad \qquad + \lambda \Big( \tfrac{1}{p_{\gamma_1}} \Ebb_{ \substack{ \Ptil(\xvec,a,y) \\
            \Pbb(\yhat|\xvec,a,y)}
    }[\Ibb(\Yhat\!=\!1\wedge \gamma_1(A,Y))] 
    - \tfrac{1}{p_{\gamma_0}} \Ebb_{ \substack{ \Ptil(\xvec,a,y) \\
            \Pbb(\yhat|\xvec,a,y)}
    }[\Ibb(\Yhat\!=\!1 \wedge \gamma_0(A,Y))]  \Big) \notag \\
    \overset{(g)}{=}& \min_\theta \max_\lambda \min_{\Pbb \in  \Delta}  \max_{\Qbb \in \Delta}
    \Ebb_{\Ptil(\xvec,a,y)} \Bigg[
    \Ebb_{\Pchk(\yhat|\xvec,a,y)} 
            \left[-\log \Pbb(\Yhat|\Xbf,A,Y)\right] + \theta^\top \Big( \Ebb_{ \Pchk(\yhat|\xvec,a,y)} [\phi(\Xbf,\Yhat)]
    - \phi(\Xbf,Y) \Big) \\
    & \qquad \qquad + \lambda \Big( \tfrac{1}{p_{\gamma_1}} \Ebb_{ \Pbb(\yhat|\xvec,a,y)} [\Ibb(\Yhat\!=\!1\wedge \gamma_1(A,Y))] 
    - \tfrac{1}{p_{\gamma_0}} \Ebb_{\Pbb(\yhat|\xvec,a,y)} [\Ibb(\Yhat\!=\!1 \wedge \gamma_0(A,Y))]  \Big) \Bigg] \notag 
\end{align}

The transformation steps above are described as follows:
\begin{enumerate}[label=(\alph*),itemsep=1pt]
    
    \item We flip the min and max order using strong minimax duality in "log-loss game" \citep{topsoe1979information,grunwald2004game}.

    \item We introduce the Lagrange dual variable $\theta$ to directly incorporate the moment matching constraints over $\Pchk$ into the objective function.

    
    \item The objective is concave on $\Pchk$ for all $\theta$, while $\Xi$ is a convex set. Given a feasible solution on the relative interior of $\Xi$, strong Lagrange duality holds \cite{boyd2004convex} and thus we can flip the optimization order of $\Pchk$ and $\theta$.
    
    \item We flip the inner min and max over $\Pbb$ and $\Pchk$ using the minimax duality, as in (a).
    
    \item We introduce the Lagrange dual variable $\lambda$ to directly incorporate the fairness constraints over $\Pbb$ into the objective function.
    
    \item Similar to (c), we use strong Lagrange duality theorem to flip the optimization order of $\lambda$ and $\Pbb$.

    \item We group the expectation with respect to the empirical training data.

\end{enumerate}

We now focus on the inner minimax formulation over $\Pbb$ and $\Pchk$, given the value of $\theta$ and $\lambda$, i.e.:
\begin{align}
    \min_{\Pbb \in  \Delta}  \max_{\Qbb \in \Delta} & \;\;
    \Ebb_{\Ptil(\xvec,a,y)} \Bigg[
    \Ebb_{\Pchk(\yhat|\xvec,a,y)} 
            \left[-\log \Pbb(\Yhat|\Xbf,A,Y)\right] + \theta^\top \Big( \Ebb_{ \Pchk(\yhat|\xvec,a,y)} [\phi(\Xbf,\Yhat)]
    - \phi(\Xbf,Y) \Big) \\
    & \qquad \qquad + \lambda \Big( \tfrac{1}{p_{\gamma_1}} \Ebb_{ \Pbb(\yhat|\xvec,a,y)} [\Ibb(\Yhat\!=\!1\wedge \gamma_1(A,Y))] 
    - \tfrac{1}{p_{\gamma_0}} \Ebb_{\Pbb(\yhat|\xvec,a,y)} [\Ibb(\Yhat\!=\!1 \wedge \gamma_0(A,Y))]  \Big) \Bigg] \notag
\end{align}

We aim to find the analytical solutions for $\Pbb$ and $\Pchk$ in the equation above. First, we write the Lagrangian by incorporating the probability simplex constraints into the objective, i.e.:
\begin{align}
\label{eq:lagrangian}
 \min_{\Pbb} \max_{\Qbb}
\min_{Z_\Qbb,\beta \geq 0}\max_{Z_\Pbb} &\; 
    L(\Pbb,\Qbb,Z_\Pbb,Z_\Qbb,\beta) =   \min_{Z_\Qbb,\beta \geq 0}\max_{Z_\Pbb} 
    \max_{\Qbb} \min_{\Pbb} 
     \mathbb{E}_{\widetilde{P}(\xvec,a,y)\Qbb(\widehat{y}|\xvec,a,y)} [-\log(\Pbb(\widehat{Y}|\Xbf,A,Y))] \\
    + &\theta^\top  (\mathbb{E}_{\widetilde{P}(\xvec,a,y)\Qbb(\widehat{y}|\xvec,a,y)} [\phi(\Xbf,\widehat{Y})] - \mathbb{E}_{\widetilde{P}(\xvec,a,y)} [\phi(\Xbf,Y)]) 
    + \mathbb{E}_{\widetilde{P}(\xvec,a,y)\Pbb(\widehat{y}|\xvec,a,y)} [F_\lambda(A,Y,\widehat{Y})] \notag \\
    + &\sum_{(\xvec,a,y) \in \mathcal{D}} Z_\Pbb(\xvec,a) \left[\mathbb{E}_{\Pbb(\widehat{y}|\xvec,a,y)} [1 | \xvec,a,y] - 1\right] 
    + \sum_{(\xvec,a,y) \in \mathcal{D}} Z_\Qbb(\xvec,a) \left[\mathbb{E}_{\Qbb(\widehat{y}|\xvec,a,y)} [1 | \xvec,a,y] - 1\right] \notag \\ 
    + &\sum_{\xvec,a,y \in \mathcal{D}}\sum_{\widehat{y} \in \mathcal{Y}} \beta(\xvec,a,y,\widehat{y}) \Qbb(\widehat{y}|\xvec,a,y) \notag
\end{align}
\begin{align}
\text{where : } F_\lambda(a,y,\widehat{y}) = 
    \begin{cases}
        \frac{\lambda}{p_{\gamma_1}} & \text{if }  \yhat = 1 \wedge \gamma_1(a,y) \\
        -\frac{\lambda}{p_{\gamma_0}} & \text{if } \yhat = 1 \wedge \gamma_0(a,y) \\
        0 & \text{otherwise}.
    \end{cases}
\end{align}


We now take the derivative of the Lagrangian with respect to $\Pbb(\yhat|\xvec,a,y)$:

\begin{align} \label{eq:derivativeWrtPhat}
\frac{\partial L}{\partial \Pbb(\yhat|\xvec,a,y)} &= -\frac{\widetilde{P}(\xvec,a,y)\Qbb(\yhat|\xvec,a,y)}{\Pbb(\yhat|\xvec,a,y)} + \widetilde{P}(\xvec,a,y)F_\lambda(a,y,\yhat) + Z_\Pbb(x,a).
\end{align}
    
By setting Eq. \eqref{eq:derivativeWrtPhat} to zero, we rewrite $\Pbb$ in terms of $\Qbb$:
\begin{align} 
\Pbb(\yhat|\xvec,a,y) &= \frac{\widetilde{P}(\xvec,a,y)\Qbb(\yhat|\xvec,a,y)}{ \widetilde{P}(\xvec,a,y)F_\lambda(a,y,\yhat) + Z_\Pbb(x,a)} = \frac{\Qbb(\yhat|\xvec,a,y)}{ F_\lambda(a,y,\yhat) + \frac{Z_\Pbb(x,a)}{\Ptil(\xvec,a,y)}}.  \label{eq:phat-in-terms-pcheck}
\end{align}
Using Eq. \eqref{eq:phat-in-terms-pcheck} we rewrite Eq. \eqref{eq:lagrangian} as:
\begin{align}
    L(\Qbb,Z_\Pbb,Z_\Qbb,\beta) =  \min_{Z_\Qbb,\beta \geq 0} & \max_{Z_\Pbb} \max_{\Qbb}   \mathbb{E}_{\widetilde{P}(\xvec,a,y)\Qbb(\widehat{y}|\xvec,a,y)} \left[-\log \Pbb(\widehat{Y}|\Xbf,A,Y)\right]  \\
    + &\theta^\top  (\mathbb{E}_{\widetilde{P}(\xvec,a,y)\Qbb(\widehat{y}|\xvec,a,y)} \left[\phi(\Xbf,\widehat{Y})\right] - \mathbb{E}_{\widetilde{P}(\xvec,a,y)} \left[\phi(\Xbf,Y)\right]) \notag \\
    + & \underbrace{\mathbb{E}_{\widetilde{P}(\xvec,a,y)\Pbb(\widehat{y}|\xvec,a,y)} \left[F_\lambda(A,Y,\widehat{Y})\right]
    + \mathbb{E}_{\widetilde{P}(\xvec,a,y)\Pbb(\widehat{y}|\xvec,a,y)} \left[\frac{Z_\Pbb(\xvec,a)}{\widetilde{P}(\xvec,a,y)}\right]}_{=\mathbb{E}_{\widetilde{P}(\xvec,a,y)\check{P}(\widehat{y}|\xvec,a,y)}[1] = 1 \text{ via Eq. \eqref{eq:phat-in-terms-pcheck}}} - \mathbb{E}_{\widetilde{P}(\xvec,a,y)}\left[\frac{Z_\Pbb(\xvec,a)}{\widetilde{P}(\xvec,a,y)}\right] \notag \\
    + &\mathbb{E}_{\widetilde{P}(\xvec,a,y)\Qbb(\widehat{y}|\xvec,a,y)} \left[\frac{Z_\Qbb(\xvec,a)}{\widetilde{P}(\xvec,a,y)}\right] - \mathbb{E}_{\widetilde{P}(\xvec,a,y)}\left[\frac{Z_\Qbb(\xvec,a)}{\widetilde{P}(\xvec,a,y)}\right] 
    + \mathbb{E}_{\widetilde{P}(\xvec,a,y)\Qbb(\widehat{y}|\xvec,a,y)} \left[\frac{\beta(\xvec,a,y,\widehat{y})}{\widetilde{P}(\xvec,a,y)}\right]. \notag
\end{align}
Replacing $\Pbb$ in Lagrangian we get:
\begin{align}
    L(\Qbb,Z_\Pbb,Z_\Qbb,\beta) =  \min_{Z_\Qbb,\beta \geq 0} & \max_{Z_\Pbb} \max_{\Qbb}   \mathbb{E}_{\widetilde{P}(\xvec,a,y)\Qbb(\widehat{y}|\xvec,a,y)} \left[-\log \Qbb(\widehat{Y}|\Xbf,A,Y) 
    + \log\left(F_\lambda(a,y,\yhat) + \frac{Z_\Pbb(x,a)}{\widetilde{P}(\xvec,a,y)}\right)\right]  \\
    + &\theta^\top  (\mathbb{E}_{\widetilde{P}(\xvec,a,y)\Qbb(\widehat{y}|\xvec,a,y)} \left[\phi(\Xbf,\widehat{Y})\right] - \mathbb{E}_{\widetilde{P}(\xvec,a,y)} \left[\phi(\Xbf,Y)\right]) + 1 - \mathbb{E}_{\widetilde{P}(\xvec,a,y)}\left[\frac{Z_\Pbb(\xvec,a)}{\widetilde{P}(\xvec,a,y)}\right] \notag \\
    + &\mathbb{E}_{\widetilde{P}(\xvec,a,y)\Qbb(\widehat{y}|\xvec,a,y)} \left[\frac{Z_\Qbb(\xvec,a)}{\widetilde{P}(\xvec,a,y)}\right] - \mathbb{E}_{\widetilde{P}(\xvec,a,y)}\left[\frac{Z_\Qbb(\xvec,a)}{\widetilde{P}(\xvec,a,y)}\right] 
    + \mathbb{E}_{\widetilde{P}(\xvec,a,y)\Qbb(\widehat{y}|\xvec,a,y)} \left[\frac{\beta(\xvec,a,y,\widehat{y})}{\widetilde{P}(\xvec,a,y)}\right]. \notag
\end{align}

We now calculate the derivative with respect to $\Qbb$. 
\begin{align}
\label{partialPCheck}
 \frac{\partial L}{\partial \Qbb(\widehat{y}|\xvec,a,y)} 
 = & \widetilde{P}(\xvec,a,y)\Bigg(-\log \Qbb(\widehat{y}|\xvec,a,y) -1 + \theta^\top \phi(\xvec,\yhat) + \log\left(F_\lambda(a,y,\yhat) + \frac{Z_\Pbb(\xvec_i,a)}{\widetilde{P}(\xvec,a,y)}\right) \\
    & \qquad \qquad+
    \frac{Z_\Qbb(\xvec,a)}{\widetilde{P}(\xvec,a,y)} + \frac{\beta(\xvec,a,y,\widehat{y})}{\widetilde{P}(\xvec,a,y)} \Bigg) \notag
\end{align}

Setting Eq. \eqref{partialPCheck} to 0 yields:
\begin{align}
    \log \frac{\Qbb(\widehat{y}|\xvec,a,y)}{F_\lambda(a,y,\yhat) + \frac{Z_\Pbb(\xvec_i,a)}{\widetilde{P}(\xvec,a,y)}} &= \theta^\top \phi(\xvec,\yhat) +
    \frac{Z_\Qbb(\xvec,a)}{\widetilde{P}(\xvec,a,y)} +
    \frac{\beta(\xvec,a,y,\widehat{y})}{\widetilde{P}(\xvec,a,y)} - 1 \\
 \Pbb(\widehat{y}|\xvec,a,y) &= e^{\theta^\top \phi(\xvec,\yhat) + \frac{Z_\Qbb(\xvec,a)}{\widetilde{P}(\xvec,a,y)} + \frac{\beta(\xvec,a,y,\yhat)}{\widetilde{P}(\xvec,a,y)} -1}.
\end{align}
We analytically solve the normalization constraint for $\Pbb$, i.e., $\sum_{\widehat{y} \in \mathcal{Y}} \Pbb(\yhat|\xvec,a,y) = 1 $ 
\begin{align}
\label{eq:normalization}
 \sum_{\widehat{y} \in \mathcal{Y}} e^{\theta^\top \phi(\xvec,\yhat)
    + \frac{Z_\Qbb(\xvec,a)}{\widetilde{P}(\xvec,a,y)} + \frac{\beta(\xvec,a,y,\yhat)}{\widetilde{P}(\xvec,a,y)} -1 } = 1 \\
    \frac{Z_\Qbb(\xvec,a)}{\widetilde{P}(\xvec,a,y)} -1 = -\log\sum_{\yhat \in \mathcal{Y}} e^{\theta^\top \phi(\xvec,\yhat) + \frac{\beta(\xvec,a,y,\yhat)}{\widetilde{P}(\xvec,a,y)} },
\end{align}
which yields following parametric form of the predictor distribution:
\begin{align}
\Pbb(\widehat{y}|\xvec,a,y) = \frac{e^{\theta^\top \phi(\xvec,\yhat) + \frac{\beta(\xvec,a,y,\widehat{y})}{\widetilde{P}(\xvec,a,y)} }}{Z_{\theta}(\xvec,a,y)} = \frac{e^{\theta^\top \phi(\xvec,\widehat{y}) + \frac{\beta(\xvec,a,y,\widehat{y})}{\widetilde{P}(\xvec,a,y)} }}{\sum_{y' \in \mathcal{Y}} {e^{\theta^\top \phi(\xvec,y') + \frac{\beta(\xvec,a,y,y')}{\widetilde{P}(\xvec,a,y)} }}}.
\end{align} 

Notice the similarity to standard logistic regression. Where in contrast, here the probability for each class is adjusted with terms $\frac{\beta(\xvec,a,y,\yhat)}{\widetilde{P}(\xvec,a,y)}$ to satisfy the fairness constraints.

From Eq. \eqref{eq:phat-in-terms-pcheck} we get the relation of $\Pchk$ and $\Pbb$. Solving the normalization constraint for $\Qbb(\widehat{y}|\xvec,a,y)$ yields:
\begin{align}
  \quad \sum_{\widehat{y} \in \mathcal{Y}} \Qbb(\widehat{y}|\xvec,a,y) &= 1 \\
 \sum_{\widehat{y} \in \mathcal{Y}} \Pbb(\widehat{y}|\xvec,a,y) \left(F_\lambda(a,y,\widehat{y}) + \frac{Z_\Pbb(\xvec,a)}{\widetilde{P}(\xvec,a,y)} \right) &= 1 \\
 \frac{Z_\Pbb(\xvec,a)}{\widetilde{P}(\xvec,a,y)} &= 1 - \sum_{\widehat{y} \in \mathcal{Y}} \Pbb(\widehat{y}|\xvec,a,y)F_\lambda(a,y,\widehat{y})
\end{align}

Thus, we can rewrite $\Qbb$ as:
\begin{align}
    \Qbb(\widehat{y}|\xvec,a,y) 
    &= \Pbb(\widehat{y}|\xvec,a,y)(F_\lambda(a,y,\widehat{y}) + 1 - \sum_{y' \in \mathcal{Y}} \Pbb(y'|\xvec,a,y)F_\lambda(a,y,y'))
\end{align}

We consider the binary classification $\widehat{y},y=\{0,1\}$, and expand $\Qbb$ as:
\begin{align}
    \Qbb(\yhat=1|\xvec,a,y) 
    &= \Pbb(\yhat=1|\xvec,a,y)[F_\lambda(a,y,1) + 1 - \Pbb(\yhat=1|\xvec,a,y)F_\lambda(a,y,1))] \\
    &= \Pbb(\yhat=1|\xvec,a,y)(1 + \Pbb(\yhat=0|\xvec,a,y)F_\lambda(a,y,1)) \\
    &= \begin{cases}
    \Pbb(\yhat=1|\xvec,a,y)(1 + \frac{\lambda}{p_{\gamma_1}}\Pbb(\yhat=0|\xvec,a,y))    & \text{if } \gamma_1(a,y)\\
    \Pbb(\yhat=1|\xvec,a,y)(1 - \frac{\lambda}{p_{\gamma_0}}\Pbb(\yhat=0|\xvec,a,y))    & \text{if } \gamma_0(a,y) \\
    \Pbb(\yhat=1|\xvec,a,y) & \text{otherwise.}
    \end{cases}
\end{align}

The above equation shows that the approximator's distribution $\Qbb$ is a quadratic function of predictor $\Pbb$, for example in the case where $\gamma_1(a,y)=1$: 
\begin{align}
    \Pchk_{\theta,\lambda}(\yhat=1|\xvec,a,y) =& \rho (1 + \tfrac{\lambda}{p_{\gamma_1}}(1-\rho) )
        = (1 +\tfrac{\lambda}{p_{\gamma_1}}) \rho - \tfrac{\lambda}{p_{\gamma_1}} \rho^2, \notag
\end{align}
where $\rho = \Pbb_{\theta,\lambda}(\yhat\!=\!1|\xvec,a,y)$. 
For the region where the function goes above 1 (or below 1 depending on sign of $F_\lambda$), the predictor's probability must be truncated in terms of fairness function such that $\Qbb = 1$ (or zero). We derive these cases in the following by considering that the complementary slackness ensures non-negativity of $\Qbb$. 

The complementary slackness from the KKT condition requires:
\begin{equation}
    \forall {\xvec,a,y,\widehat{y}}, \quad \beta(\xvec,a,y,\widehat{y})\Qbb(\widehat{y}|\xvec,a,y) = 0.
\end{equation}

Suppose that $\Qbb(\widehat{y}|\xvec,a,y) = 0$, then:
\begin{align}
    \Qbb(\widehat{y}|\xvec,a,y) =& \Pbb(\widehat{y}|\xvec,a,y)\left(F_\lambda(a,y,\yhat) + 1 - \sum_{\bar{y} \in \mathcal{Y}} \Pbb(\bar{y}|\xvec,a,y)F_\lambda(a,y,\bar{y})\right) = 0 \\ 
    \Pbb > 0 \implies & F_\lambda(a,y,\widehat{y}) + 1 - \sum_{\bar{y} \in \{0,1\}} \Pbb(\bar{y}|\xvec,a,y)F_\lambda(a,y,\bar{y}) = 0\\
    & F_\lambda(a,y,\widehat{y}) + 1 - \Pbb(0|\xvec,a,y)F_\lambda(a,y,0) - \Pbb(1|\xvec,a,y)F_\lambda(a,y,1) = 0. 
\end{align}

Since $F_\lambda(a,y,0) = 0$, then the equation above reduces to:
\begin{align}
    F_\lambda(a,y,\widehat{y}) + 1 - \Pbb(\yhat=1|\xvec,a,y)F_\lambda(a,y,1) &= 0 \\
    \Pbb(\yhat=1|\xvec,a,y) = \frac{F_\lambda(a,y,1) + 
    1}{F_\lambda(a,y,1)}.
\end{align}

Observe that the above equation can only hold if $\gamma_1(a,y)  = 1$, or $\gamma_0(a,y) = 1$. For the other cases, complementary slackness requires that $\beta(\xvec,a,y,\widehat{y}) = 0$ and $\Qbb(\widehat{y}|\xvec,a,y) = \Pbb(\widehat{y}|\xvec,a,y) =  \frac{e^{\theta^\top \phi(\xvec,\widehat{y})}}{Z_{\theta}(\xvec)}$.

Thus, we have the following cases:
\begin{align}
    \Pbb(\widehat{y}=1|\xvec,a,y) = 
    \begin{cases}
        \frac{p_{\gamma_1}}{\lambda} & \text{if }  \gamma_1(a,y) \land \check{P}(1|\xvec,a,y) = 1,  \\
        -\frac{p_{\gamma_0}}{\lambda} & \text{if }  \gamma_0(a,y) \land \check{P}(1|\xvec,a,y) = 1 \\
        1 + \frac{p_{\gamma_1}}{\lambda} & \text{if }  \gamma_1(a,y) \land \check{P}(1|\xvec,a,y) = 0 \\
        1 - \frac{p_{\gamma_0}}{\lambda} & \text{if }  \gamma_0(a,y)\land \check{P}(1|\xvec,a,y) = 0 \\
        \frac{e^{\theta^\top \phi(\xvec,1)}}{Z_{\theta}(\xvec)} & \text{otherwise.} 
    \end{cases}
\end{align}

Therefore, if $\lambda \geq 0$, we have the following parametric form for the predictor distribution:
\begin{align}
    \Pbb(\widehat{y}=1|\xvec,a,y) = 
    \begin{cases}
        \min{\{\frac{p_{\gamma_1}}{\lambda},\frac{e^{\theta^\top \phi(\xvec,1)}}{Z_{\theta}(\xvec)}\}} & \text{if } \gamma_1(a,y)\\
        \max{\{1-\frac{p_{\gamma_0}}{\lambda},\frac{e^{\theta^\top \phi(\xvec,1)}}{Z_{\theta}(\xvec)} \}} & \text{if } \gamma_0(a,y) \\
        \frac{e^{\theta\phi(\xvec,1)}}{Z_{\theta}(\xvec)} & \text{otherwise} 
    \end{cases}
\end{align}
and if $\lambda < 0$:
\begin{align}
    \Pbb(\widehat{y}=1|\xvec,a,y) = 
    \begin{cases}
        \max{\{1+\frac{p_{\gamma_1}}{\lambda},\frac{e^{\theta^\top \phi(\xvec,1)}}{Z_{\theta}(\xvec)}\}} & \text{if } \gamma_1(a,y) \\
        \min{\{-\frac{p_{\gamma_1}}{\lambda},\frac{e^{\theta^\top \phi(\xvec,1)}}{Z_{\theta}(\xvec)} \}} & \text{if } \gamma_0(a,y)\\
        \frac{e^{\theta\phi(\xvec,1)}}{Z_{\theta}(\xvec)} & \text{otherwise.} 
    \end{cases}
\end{align}
Note that if $\lambda = 0$, all of the cases collapse to a single case $\Pbb(\widehat{y}=1|\xvec,a,y) = \frac{e^{\theta\phi(\xvec,1)}}{Z_{\theta}(\xvec)}$.

\end{proof}

\subsection{Proof of Theorem \ref{thm:objective}}

\begin{proof}[Proof of Theorem \ref{thm:objective}]

Given the optimum $\lambda^*_\theta$ for each $\theta$, Eq. \eqref{eq:dual} reduces to:
\begin{align}
&\min_{\theta} \; \tfrac{1}{n}\!\!\!\!\!\!
    \sum_{(\xvec,a,y) \in \Dcal} \left\{
    \Ebb_{
        \Pchk_{\theta,\lambda^*_\theta} (\yhat|\xvec,a,y) } 
            \left[-\log \Pbb_{\theta,\lambda^*_\theta}(\Yhat|\xvec,a,y)\right] + \theta^\top \left( \Ebb_{ \Pchk_{\theta,\lambda^*_\theta}(\yhat|\xvec,a,y)}
    [\phi(\xvec,\Yhat)]
    - \phi(\xvec,y) \right) \right\}\\
 =& \min_{\theta} \tfrac{1}{n}\!\!\!\!\!\!
    \sum_{(\xvec,a,y) \in \Dcal} \sum_{\widehat{y} \in \mathcal{Y}} - \Qbb(\widehat{y}|\xvec,a,y) \left[ \log \Pbb(\widehat{y}|\xvec,a,y) - \theta^\top \phi(\xvec,\widehat{y}) \right] - \theta^\top\phi(\xvec,y)
\end{align}

Plugging the parametric distribution forms of $\Pbb$ and $\Pchk$, for $\lambda^*_\theta > 0$, we get:
\begin{align}
 & 
\min_{\theta} \tfrac{1}{n}\!\!\!\!\!\!
\sum_{(\xvec,a,y) \in \Dcal}
\begin{cases}
   -\log(\frac{p_{\gamma_1}}{\lambda^*_\theta}) + \theta^\top ( \phi(\xvec,1) - \phi(\xvec,y) ) & \text{if } \gamma_1(a,y), \text{and }  \frac{e^{\theta^\top \phi(\xvec,1)}}{Z_{\theta}(\xvec)} > \frac{p_{\gamma_1}}{\lambda^*_\theta} \\
    -\log(\frac{p_{\gamma_0}}{\lambda^*_\theta}) + \theta^\top ( \phi(\xvec,0)- \phi(\xvec,y))   & \text{if } \gamma_0(a,y),  \text{and } \frac{e^{\theta^\top \phi(\xvec,1)}}{Z_{\theta}(\xvec)} < 1 - \frac{p_{\gamma_0}}{\lambda^*}\\
    \log \sum_{y' \in \mathcal{Y}} e^{\theta^\top \phi(\xvec,y')} - \theta^\top\phi(\xvec,y)& \text{otherwise},
\end{cases}
\end{align}
and for $\lambda^*_\theta < 0$, we get:
\begin{align}
& 
\min_{\theta} \tfrac{1}{n}\!\!\!\!\!\!
\sum_{(\xvec,a,y) \in \Dcal}
\begin{cases}
   -\log(-\frac{p_{\gamma_1}}{\lambda^*_\theta}) + \theta^\top ( \phi(\xvec,0) - \phi(\xvec,y) ) & \text{if } \gamma_1(a,y), \text{and }  \frac{e^{\theta^\top \phi(\xvec,1)}}{Z_{\theta}(\xvec)} < 1 + \frac{p_{\gamma_1}}{\lambda^*_\theta} \\
    -\log(-\frac{p_{\gamma_0}}{\lambda^*_\theta}) + \theta^\top ( \phi(\xvec,1)- \phi(\xvec,y))   & \text{if } \gamma_0(a,y),  \text{and } \frac{e^{\theta^\top \phi(\xvec,1)}}{Z_{\theta}(\xvec)} > - \frac{p_{\gamma_0}}{\lambda^*}\\
    \log \sum_{y' \in \mathcal{Y}} e^{\theta^\top \phi(\xvec,y')} - \theta^\top\phi(\xvec,y)& \text{otherwise},
\end{cases}
\end{align}
and for $\lambda^*_\theta = 0$, we get:
\begin{align}
 & 
\min_{\theta} \tfrac{1}{n}\!\!\!\!\!\!
\sum_{(\xvec,a,y) \in \Dcal} \log\left(\sum_{y' \in \mathcal{Y}} e^{\theta^\top \phi(\xvec_i,y')}\right) - \theta^\top\phi(\xvec,y).
\end{align}
\end{proof}

\subsection{Proof of Theorem \ref{thm:convex}}

\begin{proof}[Proof of Theorem \ref{thm:convex}]

It is easy to see that given $\theta$ and $\lambda_\theta$, the function $\ell_{\theta,\lambda_\theta}(\xvec,a,y)$ is convex  for each sample. It is a linear function with respect to $\theta$ for the truncated cases. For the ``otherwise'' case, we know that $\log Z_\theta(\xvec)$ is convex. Hence, the full objective given $\theta$ and $\lambda_\theta$ is convex (non-negative weighted sum of convex functions is convex). 

Now, the objective function given $\theta$ can be written as:
\begin{align}
    \max_{\lambda} \sum_{(\xvec,a,y) \in \Dcal} \ell_{\theta,\lambda_\theta}(\xvec,a,y).
    \label{eq:obj_max}
\end{align}
Since for each $\lambda$, the function $\sum_{(\xvec,a,y) \in \Dcal} \ell_{\theta,\lambda_\theta}(\xvec,a,y)$ is convex, the objective in Eq. \eqref{eq:obj_max} is also convex with respect to $\theta$ (pointwise supremum of convex functions is convex).
\end{proof}

\subsection{Proof of Theorem \ref{thm:consistency}}

\begin{proof}[Proof of Theorem \ref{thm:consistency}]

A fully expressive feature representation constrains the approximator's distribution in our primal formulation Eq. \eqref{eq:definition} to match the population distribution. Then,  the optimization simplifies to:
\begin{align}
    \Pbb^*(\yhat|\xvec,a,y) = &\argmin_{\Pbb \in  \Delta \cap \Gamma}  \Ebb_{
        P(\xvec,a,y)} 
            \left[-\log \Pbb(\Yhat|\Xbf,A,Y)\right] \\
    =&\argmin_{\Pbb \in  \Delta \cap \Gamma}  -\sum_{(\xvec,a,y)}
        P(\xvec,a,y) \log\left(\Pbb(\Yhat=y|\xvec,a,y)\right)\\
    =&\argmin_{\Pbb \in  \Delta \cap \Gamma}  -\sum_{(\xvec,a,y)}
        P(\xvec,a,y) \log\left(\frac{\Pbb(\Yhat=y|\xvec,a,y)}{P(Y=y|\xvec,a)}\right) - \sum_{(\xvec,a,y)} P(\xvec,a,y) \log \left(P(Y=y|\xvec,a)\right) \\
    =&\argmin_{\Pbb \in  \Delta \cap \Gamma}  -\sum_{(\xvec,a,y)}
        P(\xvec,a,y) \log\left(\frac{\Pbb(\Yhat=y|\xvec,a,y)}{P(Y=y|\xvec,a)}\right) \\
    =&\argmin_{\Pbb \in  \Delta \cap \Gamma} \; D_{\text{KL}} ( P \; \| \; \Pbb ).
\end{align}
This means that the optimal solution of our method when learning from the population distribution with a fully expressive feature representation is the fair predictive distribution that has the minimum KL-divergence from the population distribution.
\end{proof}

\subsection{Proof of Theorem \ref{thm:consistency2}}

\begin{proof}[Proof of Theorem \ref{thm:consistency2}]
For fairness constraints that depend on the true label (e.g., \textsc{E.Opp.} and \textsc{E.Odd.}), as described in \S \ref{sec:inference}, we compute $\Pbb^*(\yhat|\xvec,a)$ using Eq. \eqref{eq:pred-fp} with the input of $\Pbb^*(\yhat|\xvec,a,y)$ and the approximator's distribution to approximate the true distribution. Based on the proof of Theorem \ref{thm:consistency}, we know that, in the limit, the approximator's distribution matches with the true distribution $P(\xvec,a,y)$. Hence, our prediction becomes the standard marginal probability rule (it is no longer an approximation), i.e.:
\begin{align}
     \Pbb^*(\yhat|{\bf \xvec},a) &= \Pbb^*(\yhat|{\bf \xvec},a,y=1) P(y=1|{\bf \xvec},a) + \Pbb^*(\yhat|\xvec,a,y=0) P(y=0|\xvec,a). 
\end{align}
Therefore, our predictor is the marginal predictor distribution computed from the fair predictor's distribution with the closest KL-divergence from the true distribution, marginalized over the true label.
\end{proof}

\section{Optimization Details}

\subsection{Algorithm for finding the optimal fairness parameters}

\label{sec:optimal_lambda}

The inner maximization in Eq. \eqref{eq:dual} 
finds the optimal $\lambda$ that enforces the fairness constraint. From the perspective of the parametric distribution of $\Pbb$, this is equivalent to finding threshold points (e.g., ${p_{\gamma_1}}/{\lambda}$ and $1 - {p_{\gamma_0}}/{\lambda}$) in the $\min$ and $\max$ function of Eq. \eqref{eq:truncate} such that the expectation of the truncated exponential probabilities of $\Pbb$ in group $\gamma_1$ match the one in group $\gamma_0$. Given the value of $\theta$, we find the optimum $\lambda^*$ directly by finding the threshold. 
We first compute the exponential probabilities $P_{e}(\yhat=1|\xvec,a,y) = \exp(\theta^\top \phi(\xvec,1)) / {Z_{\theta}(\xvec)}$ for each examples in $\gamma_1$ and $\gamma_0$. Let $E_1$ and $E_0$ be the sets that contains $P_e$ for group $\gamma_1$ and $\gamma_0$ respectively, and let $\ebar_1$ and $\ebar_0$ be the average of $E_1$ and $E_0$ respectively.

Finding $\lambda^*$ given the sets $E_1$ and $E_0$ requires sorting the probabilities for each set, and then iteratively finding the threshold points for both sets ($t_1$ and $t_0$ respectively) simultaneously as described in Algorithm \ref{alg:lambda}. Without loss of generality\footnote{For the case when $\ebar_1 < \ebar_0$, we flip the group membership and then $\lambda^*$ is the  negative of the solution produced by the algorithm. When $\ebar_1 = \ebar_0$, the exponential probabilities are already fair, and we set $\lambda^* = 0$.}\!, the algorithm assumes that $\ebar_1 > \ebar_0$. 
It find the threshold points by traversing the sorted list of points in $E_1$ and $E_0$ until it find the thresholds that ensure the equality of the average truncated probabilities in both groups.

        \begin{algorithm}[H]
        	\caption{Find $\lambda^*$ given $E_1$ and $E_0$}
        	\label{alg:lambda}
        	\begin{algorithmic}[1]
        		\STATE {\bfseries Input:} $(E_1, E_0)$, s.t. $\ebar_1 > \ebar_0$
        		\STATE Sort $E_1$ in decreasing order
                \STATE Sort $E_0$ in increasing order
                \STATE Calculate the difference $\dbar = \ebar_1 - \ebar_0$
                \STATE $t_1 \leftarrow 1, \; t_0 \leftarrow 0$ \COMMENT{thresholds for $E_1$ and $E_0$ respectively}
                \STATE Set gain to be 0.
        		\WHILE {the gain is less than $\dbar$}
        		\STATE Calculate two candidates for the next move:
        		       \begin{enumerate}[noitemsep,topsep=0pt,label=(\arabic*),itemindent=0pt]
        		       \item move $t_1$ to the next $P_e$ in $E_1$ list
        		       \item move $t_0$ to the next $P_e$ in $E_0$ list
        		       \end{enumerate}
        		\STATE Calculate the gain for each move and the effect of the move for the other group.
        		\STATE Choose the move that has minimum gain
        		\ENDWHILE 
        		\STATE Calculate threshold that produces gain equal to $\dbar$, which is located between the last move in the loop and the threshold before the move
        		\STATE Calculate $\lambda^*$ based on the threshold
        		\STATE {\bfseries return:} $\lambda^*$
        	\end{algorithmic}
        \end{algorithm}

The runtime of 
Algorithm \ref{alg:lambda} is dominated by sorting, i.e., $O(n \log n)$ time. 
However, if we perform subgradient based optimization on $\theta$, the value of the current $\theta$ in each iteration does not change much, and neither do the exponential probabilities in each group. 
By maintaining the sorted list in each iteration as the basis index, the next iteration will have the probabilities in a nearly sorted order. Therefore, we can improve the sorting cost requirement by running sorting algorithms that work best on a nearly sorted list (e.g. insertion sort, Timsort, or $P^3$-sort) with run times approaching $O(n)$ as the list is close to being fully sorted \cite{chandramouli2014patience}.

\subsection{Subgradient-based Optimization}

\label{sec:subgrad-optim}

Our optimization objective is:
\begin{align}
 \Lcal &=\min_{\theta}  \tfrac{1}{n}
    \textstyle\sum_{(\xvec,a,y) \in \Dcal} \ell_{\theta,\lambda^*_\theta}(\xvec,a,y), \quad \text{ where: }  \\
    &\ell_{\theta,\lambda^*}(\xvec,a,y) = 
    - \theta^\top \phi(\xvec,y) + 
    \begin{cases}
        -\log(\frac{p_{\gamma_1}}{\lambda^*_\theta}) + \theta^\top ( \phi(\xvec,1) 
        ) & \text{if } \gamma_1(a,y) \wedge T(\xvec,\theta) \wedge \lambda^*_\theta > 0 \\
        -\log(\frac{p_{\gamma_0}}{\lambda^*_\theta}) + \theta^\top ( \phi(\xvec,0) 
        ) & \text{if } \gamma_0(a,y) \wedge T(\xvec,\theta) \wedge \lambda^*_\theta > 0 \\
        -\log(-\frac{p_{\gamma_1}}{\lambda^*_\theta}) + \theta^\top ( \phi(\xvec,0) 
        ) & \text{if } \gamma_1(a,y) \wedge T(\xvec,\theta) \wedge \lambda^*_\theta < 0\\
        -\log(-\frac{p_{\gamma_0}}{\lambda^*_\theta}) + \theta^\top ( \phi(\xvec,1) 
        ) & \text{if } \gamma_0(a,y) \wedge T(\xvec,\theta) \wedge \lambda^*_\theta < 0 \\
        \log(Z_\theta(\xvec))  & \text{otherwise}. \notag
    \end{cases}
\end{align}

Taking the gradient of the objective with respect to $\theta$, for $\lambda^*_\theta > 0$, we get:
\begin{align}
 \partial_\theta \Lcal \ni& 
\min_{\theta} \tfrac{1}{n}\!\!\!\!\!\!
\sum_{(\xvec,a,y) \in \Dcal}
\begin{cases}
     \phi(\xvec,1) - \phi(\xvec,y)  & \text{if } \gamma_1(a,y), \text{and }  \frac{e^{\theta^\top \phi(\xvec_i,1)}}{Z_{\theta}(\xvec_i)} > \frac{p_{\gamma_1}}{\lambda^*_\theta} \\
     \phi(\xvec,0)- \phi(\xvec,y)   & \text{if } \gamma_0(a,y),  \text{and } \frac{e^{\theta^\top \phi(\xvec_i,1)}}{Z_{\theta}(\xvec_i)} < 1 - \frac{p_{\gamma_0}}{\lambda^*}\\
    \textstyle\sum_{y' \in \mathcal{Y}}\tfrac{ e^{\theta^\top\phi(\xvec,y')}}{Z_\theta(\xvec)}\phi(\xvec,y') - \phi(\xvec,y) & \text{otherwise},
\end{cases}
\end{align}
and for $\lambda^*_\theta < 0$, we get:
\begin{align}
 \partial_\theta \Lcal \ni& 
\min_{\theta} \tfrac{1}{n}\!\!\!\!\!\!
\sum_{(\xvec,a,y) \in \Dcal}
\begin{cases}
   \phi(\xvec,0) - \phi(\xvec,y)  & \text{if } \gamma_1(a,y), \text{and }  \frac{e^{\theta^\top \phi(\xvec_i,1)}}{Z_{\theta}(\xvec_i)} < 1 + \frac{p_{\gamma_1}}{\lambda^*_\theta} \\
     \phi(\xvec,1)- \phi(\xvec,y)   & \text{if } \gamma_0(a,y),  \text{and } \frac{e^{\theta^\top \phi(\xvec_i,1)}}{Z_{\theta}(\xvec_i)} > - \frac{p_{\gamma_0}}{\lambda^*}\\
    \textstyle\sum_{y' \in \mathcal{Y}}\tfrac{ e^{\theta^\top\phi(\xvec,y')}}{Z_\theta(\xvec)}\phi(\xvec,y') - \phi(\xvec,y) & \text{otherwise},
\end{cases}
\end{align}
and for $\lambda^*_\theta = 0$, we get:
\begin{align}
 \partial_\theta \Lcal \ni& 
\min_{\theta} \tfrac{1}{n}\!\!\!\!\!\!
\sum_{(\xvec,a,y) \in \Dcal} \sum_{y' \in \mathcal{Y}}\tfrac{ e^{\theta^\top\phi(\xvec,y')}}{Z_\theta(\xvec)}\phi(\xvec,y') - \phi(\xvec,y).
\end{align}

This can be simplified as:
\begin{align}
 \partial_\theta \Lcal \ni& 
\min_{\theta} \tfrac{1}{n}\!\!\!\!\!\!
\sum_{(\xvec,a,y) \in \Dcal}
\begin{cases}
     \phi(\xvec,1) - \phi(\xvec,y)  & \text{if } \gamma_1(a,y), \text{and }  \frac{e^{\theta^\top \phi(\xvec_i,1)}}{Z_{\theta}(\xvec_i)} > \frac{p_{\gamma_1}}{\lambda^*_\theta} , \text{and } \lambda^*_\theta > 0, \\
     & \text{or } \text{if } \gamma_0(a,y),  \text{and } \frac{e^{\theta^\top \phi(\xvec_i,1)}}{Z_{\theta}(\xvec_i)} > - \frac{p_{\gamma_0}}{\lambda^*} , \text{and }  \lambda^*_\theta < 0   \\
     \phi(\xvec,0)- \phi(\xvec,y)   & \text{if } \gamma_0(a,y),  \text{and } \frac{e^{\theta^\top \phi(\xvec_i,1)}}{Z_{\theta}(\xvec_i)} < 1 - \frac{p_{\gamma_0}}{\lambda^*} , \text{and } \lambda^*_\theta > 0,\\
     & \text{or } \text{if } \gamma_1(a,y), \text{and }  \frac{e^{\theta^\top \phi(\xvec_i,1)}}{Z_{\theta}(\xvec_i)} < 1 + \frac{p_{\gamma_1}}{\lambda^*_\theta} , \text{and }  \lambda^*_\theta < 0   \\
    \textstyle\sum_{y' \in \mathcal{Y}}\tfrac{ e^{\theta^\top\phi(\xvec,y')}}{Z_\theta(\xvec)}\phi(\xvec,y') - \phi(\xvec,y) & \text{otherwise},
\end{cases}
\end{align}
or using our $T(\xvec,\theta)$ notation as:
\begin{align}
& \qquad \qquad \tfrac{1}{n}
    \sum_{(\xvec,a,y) \in \Dcal} g_{\theta,\lambda^*_\theta}(\xvec,a,y) \in  \partial_\theta \Lcal, \text{ where: } 
    \label{eq:gradient}
    \\
g_{\theta,\lambda^*_\theta}(\xvec,a,y) =&\begin{cases}
    \phi(\xvec,1) - \phi(\xvec,y), \qquad \text{if } (\gamma_1(a,y) \wedge T(\xvec,\theta) \wedge \Ibb[\lambda_\theta^* \!>\! 0]) \vee (\gamma_0(a,y) \wedge T(\xvec,\theta) \wedge \Ibb[\lambda^*_\theta \!<\! 0]) \\
    \phi(\xvec,0) - \phi(\xvec,y), \qquad \text{if } (\gamma_0(a,y) \wedge T(\xvec,\theta) \wedge \Ibb[\lambda_\theta^* \!>\! 0]) \vee (\gamma_1(a,y) \wedge T(\xvec,\theta) \wedge \Ibb[\lambda^*_\theta \!<\! 0]) \\
    \sum_{y' \in \mathcal{Y}}\tfrac{ \exp(\theta^\top\!\phi(\xvec,y'))}{Z_\theta(\xvec)}\phi(\xvec,y') \!-\! \phi(\xvec,y), \qquad \text{otherwise. }
\end{cases} \notag 
\end{align}

\end{document}